\newtheorem{theorem}{Theorem}
\newtheorem{lemma}{Lemma}
\newtheorem{definition}{Definition}
\newtheorem{assumption}{Assumption}
\theoremstyle{remark}
\newtheorem*{remark}{Remark}
\def\ALG@special@indent{%
    \ifdim\ALG@thistlm=0pt\relax
        \hskip-\leftmargin
    \else
        \hskip\ALG@thistlm
    \fi
}
\newcommand{\ClientUpdate}[1]{\item[]\noindent \ALG@special@indent \textbf{ClientUpdate($ \theta^t$):}\ #1}
\newcounter{todocounter}
\crefname{section}{Sec.}{Secs.}
\Crefname{section}{Section}{Sections}
\Crefname{table}{Table}{Tables}
\crefname{table}{Tab.}{Tabs.}
\begin{document}

\title{Achieving Byzantine-Resilient Federated Learning via Layer-Adaptive Sparsified Model Aggregation}

\author{Jiahao Xu\\
University of Nevada, Reno\\
{\tt\small jiahaox@unr.edu}
\and
Zikai Zhang\\
University of Nevada, Reno\\
{\tt\small zikaiz@unr.edu}
\and
Rui Hu\\
University of Nevada, Reno\\
{\tt\small ruihu@unr.edu}
}
\maketitle

\begin{abstract}
Federated Learning (FL) enables multiple clients to collaboratively train a model without sharing their local data. Yet the FL system is vulnerable to well-designed Byzantine attacks, which aim to disrupt the model training process by uploading malicious model updates. Existing robust aggregation rule-based defense methods overlook the diversity of magnitude and direction across different layers of the model updates, resulting in limited robustness performance, particularly in non-IID settings. To address these challenges, we propose the Layer-Adaptive Sparsified Model Aggregation (LASA) approach, which combines pre-aggregation sparsification with layer-wise adaptive aggregation to improve robustness. Specifically, LASA includes a pre-aggregation sparsification module that sparsifies updates from each client before aggregation, reducing the impact of malicious parameters and minimizing the interference from less important parameters for the subsequent filtering process. Based on sparsified updates, a layer-wise adaptive filter then adaptively selects benign layers using both magnitude and direction metrics across all clients for aggregation. We provide the detailed theoretical robustness analysis of LASA and the resilience analysis for the FL integrated with LASA. Extensive experiments are conducted on various IID and non-IID datasets. The numerical results demonstrate the effectiveness of LASA. Code is available at \url{https://github.com/JiiahaoXU/LASA}.

\end{abstract}


\section{Introduction}
Federated Learning (FL)~\cite{FL_OG_shake} is an emerging distributed machine learning paradigm that enables multiple clients, such as mobile devices or organizations, to collaboratively train a shared model while keeping their private data locally. This approach significantly reduces the necessity for data centralization, thereby not only decreasing data communication costs but also mitigating data privacy concerns. 
FL framework has been applied in diverse fields such as healthcare~\cite{nguyen2022federated} and remote sensing~\cite{liu2020federated}, facilitating the use of machine learning in scenarios where data privacy and communication efficiency are critical. 

However, distributing the model training across individual clients makes FL vulnerable to poisoning attacks~\cite{bulyan, liu2022evil, tian2022comprehensive}, where an attacker controls a subset of clients and manipulates their local model updates to compromise the integrity of the global model. The \textit{Byzantine attack}~\cite{OG_byz,bulyan, fang2020local, dnc, lie, signguard} is one of the potent attacks which generally degrades the model's overall performance during the training. Specifically, under Byzantine attacks, a small set of malicious clients sends corrupted local model updates to the server during the training. It is shown that if the aggregation rule used by the server is a simple linear combination of local model updates, even one single malicious client can easily destroy the convergence of the global model~\cite{krum_mkrum_bulyan}. Therefore, many efforts have been dedicated to designing aggregation rules that are robust against Byzantine attacks. 

Existing robust aggregation rules can be mainly categorized into two types based on their granularity in handling model parameters: coordinate-wise robust aggregation~\cite{krum_mkrum_bulyan, trmean, trmean2} and model-wise robust aggregation~\cite{geomed, krum_mkrum_bulyan, bulyan, dnc, signguard}. Coordinate-wise robust aggregation focuses on evaluating and aggregating each coordinate of model updates independently, effectively filtering out extreme values that could represent malicious activity with fine granularity. 
In contrast, model-wise robust aggregation holistically assesses the entire model update from each client to detect outliers. 
The primary challenge in designing an effective robust aggregation rule lies in the difficulty of distinguishing between benign and malicious model updates, especially when the attacker's manipulations are subtle enough to blend seamlessly with benign data. This challenge becomes more pronounced in FL settings with non-IID data. To capture the subtle difference between benign and malicious model updates in such cases, it is essential to strike a balance between fine-grained and holistic assessment of model updates. 

Recently, model sparsification has been used as an approach to enhance the Byzantine robustness of FL~\cite{panda2022sparsefed, VRDP, meng2023enhancing}. The key idea is to remove less important parameters in the model updates to alleviate the malicious impact while maintaining the model's utility. 
For instance, SparseFed~\cite{panda2022sparsefed} removes the parameters with less importance on the aggregated model update to defend against Byzantine attacks. 
However, current methods typically utilize a uniform sparsification mask for all model updates, leading to high sparsification error and limited robustness improvement, especially in non-IID settings where model updates diverge and necessitate personalized sparsification to preserve model utility.

Inspired by the sparsification-based method and robust aggregation with different granularity, we propose a novel Byzantine robust aggregation rule called \textbf{LASA} (\underline{\textbf{L}}ayer-\underline{\textbf{A}}daptive \underline{\textbf{S}}parsified Model \underline{\textbf{A}}ggregation) that achieves Byzantine robustness of FL at the granularity of layer-level and important parameters only. 
Basically, LASA at first sparsifies each local model update individually without degrading the model utility. These sparsified updates are then fed into a layer-wise filter to adaptively detect and drop potential malicious layers. Finally, the remaining layers will be averaged as the global model update. The model sparsification is applied before aggregation to reduce the attack surface of malicious clients and maintain the model utility of benign clients with personalized sparsification. It also enables the subsequent filtering to focus on key parameters that determine the model performance. Notably, this strategy is particularly beneficial for non-IID settings, where local model updates of benign clients are diverse. The layer-wise filter extracts both the magnitude and direction of a sparsified layer as metrics and also enables layer-adaptive filtering with minimal control parameters, allowing LASA to strike a balance between coordinate-wise filtering and model-wise filtering efficiently and achieve better robustness. 
The model sparsification is carefully co-designed with the layer-wise adaptive filtering to ensure its amplification effect on robustness. The main contributions of this work are summarized as follows:
\begin{itemize}[leftmargin=*]
    \item We propose a novel robust aggregation rule called LASA. \textit{To the best of our knowledge, our work is the \textbf{first to combine pre-aggregation model sparsification with layer-wise adaptive aggregation} to defend against Byzantine attacks in FL.} LASA can be easily integrated into the existing FL frameworks.
    \item We introduce a robustness criterion named $\kappa$-robustness, which quantifies the ability of an aggregation rule to accurately estimate the average of honest clients' inputs when $f$ out of $n$ clients are malicious. We prove that LASA is a $\kappa$-robust aggregation rule with $\kappa = O(c_k (1+f/(n-2f))$, where $c_k\leq1$ correlates with the sparsification level, demonstrating the effectiveness of sparsification in amplifying robustness. 
    Based on the robustness analysis, we also provide the resilience analysis of the local SGD-based FL algorithm with LASA, for general non-convex loss functions and in the context of non-IID data. \textit{To the best of our knowledge, our work is the \textbf{first to theoretically analyze layer-wise defense methods.}}
    \item We empirically evaluate the performances of LASA by conducting comprehensive experiments on both IID and non-IID datasets under various SOTA attacks. Compared to the SOTA defense methods, LASA achieves better robustness as well as performance.
\end{itemize}

\section{Related Works}\label{sec:bg_moti}

\textbf{Poisoning attacks to FL.} Federated Averaging (FedAvg)~\cite{FL_OG_shake} stands as the classic FL method in non-adversarial environments. Yet, it has a critical vulnerability: the global model within FedAvg is susceptible to arbitrary manipulation by even a single malicious client~\cite{krum_mkrum_bulyan, trmean}. In particular, such a client can mislead the convergence of the global model by poisoning its local update sent to the server, which is known as poisoning attack in the context of FL~\cite{krum_mkrum_bulyan, fang2020local, lie, bulyan, mahloujifar2019universal, xie2020fall, dnc, signguard, bagdasaryan2020backdoor, bhagoji2019analyzing, wang2020attack, xie2020dba, munoz2017towards, jagielski2018manipulating, tolpegin2020data, gupta2023novel, sun2021data, fereidooni2023freqfed}. 

Poisoning attacks can be categorized into \textit{untargeted attacks} and \textit{target attacks}. Targeted attacks (aka \textit{backdoor attacks}) aim to mislead the global model to incorrectly predict certain outcomes chosen by the attacker for specific inputs while keeping the model's performance on other inputs unaffected~\cite{bagdasaryan2020backdoor, bhagoji2019analyzing, tolpegin2020data, wang2020attack, xie2020dba}. Untargeted attacks (aka \textit{Byzantine attacks}) aim to generally disrupt the overall performance of the global model without any specific focus~\cite{fang2020local, lie, bulyan, mahloujifar2019universal, xie2020fall, dnc, signguard}. In this work, we focus on the Byzantine attacks on FL. The technical details of the SOTA Byzantine attacks~\cite{lie, dnc, signguard} are given in Appendix~\ref{attackmethods}. 

\textbf{Existing defense methods.}\label{relatedwork}
 Existing defense methods against Byzantine attacks in FL can be generally categorized into three types: 1) auxiliary data-based methods~\cite{park2021sageflow, cao2020fltrust, xie2019zeno} which leverage the proxy dataset to conduct additional evaluation and thus filter out updates with abnormal performance. However, these methods somehow contradict the privacy-preserving goal of FL as they require a server dataset that is similar to local data to help identify malicious updates. Note that our approach does not need an auxiliary dataset and is orthogonal to these methods. 2) sparsification-based methods~\cite{panda2022sparsefed, VRDP, meng2023enhancing} which aim to remove malicious model parameters to enhance the robustness. For example, \textit{SparseFed}~\cite{panda2022sparsefed} sparsifies the aggregated model update at the server side, integrating with model clipping and error feedback, to mitigate the impact of malicious local model updates. Model sparsification can enhance robustness by reducing malicious parameters, but since the server can't identify malicious clients, it sparsifies all model updates, which degrades benign models' performance. Moreover, existing works~\cite{panda2022sparsefed, VRDP} use uniform sparsification masks which increase sparsification errors, especially in non-IID settings. Our method applies individual sparsification to each update and combines it with magnitude and direction-based filtering to boost robustness. In addition, we theoretically analyze how this model sparsification contributes to robust aggregation, bridging the gap in the current SOTAs. 3) Robust aggregation-based methods~\cite{trmean, trmean2, geomed, krum_mkrum_bulyan, bulyan, AFA,fang2020local, dnc, signguard, xu2022byzantine, varma2021legato} focus on developing a new aggregation rule on the server side that is robust against Byzantine attacks to replace the standard \textit{averaging} aggregation rule used in FedAvg. For example, 
\textit{Trimmed mean} (TrMean) proposed in \cite{trmean} discards a certain percentage of the highest and lowest values among the received models for each dimension. After this trimming, the mean of the remaining values is computed by the server, which mitigates the impact of extreme values on the aggregated model. 
\textit{Multi-Krum}~\cite{krum_mkrum_bulyan} selects the most reliable local model that has the smallest sum of squared Euclidean distances to all other models as the output. 
\textit{LEGATO}~\cite{varma2021legato} weights each layer before aggregation but cannot eliminate malicious parameters.
Recently, a defense method called \textit{SignGuard} that achieves {SOTA} results has been proposed in \cite{signguard}. It combines direction-based clustering and magnitude-based filtering to identify malicious model updates.

However, coordinate-wise methods~\cite{krum_mkrum_bulyan, trmean, trmean2} ignore model direction, and model-wise methods~\cite{geomed, krum_mkrum_bulyan, bulyan, dnc, signguard} overlook the diverse distribution of direction and magnitude across layers, limiting their robustness. Our layer-level approach is finer-grained than model-wise methods and more comprehensive than coordinate-wise methods. Furthermore, most works assume IID data, using clustering or distance-based methods to filter outliers~\cite{signguard, krum_mkrum_bulyan, xu2022byzantine, geomed}. However, in real-world FL scenarios, data is often non-IID, making these methods less effective. Our approach combines pre-aggregation model sparsification with layer-wise direction- and magnitude-based filtering to handle diverse model updates with only key parameters. We use a novel sign-based metric to assess model update directions, improving the purity and effectiveness of direction-based filtering. Unlike works like SignGuard~\cite{signguard}, we provide a detailed theoretical robustness analysis of LASA and its resilience in FL. Our theoretical analysis is most related to works on distributed gradient descent (D-GD)~\cite{kappa, allouah2023robust, allouah2024robust}, but we focus on FL with local SGD, which increases local model divergence and complicates the analysis.

\section{Our Solution: LASA}

The LASA process is given in Algorithm~\ref{alg:main}. 
LASA features an innovative design and integration of pre-aggregation model sparsification and layer-wise robust aggregation on the server side, aimed at mitigating the impact of malicious local model updates. 

\begin{algorithm}[t]
    \caption{LASA}
    \label{alg:main}
    \begin{algorithmic}[1] 
    \REQUIRE Set of $n$ local model updates $\{\Delta_i\}_{i=1}^{n}$, number of model layers $L$, sparsification parameter $k$, magnitude-based radius $\lambda_{m}$, and direction-based radius $\lambda_{d}$
        \FOR{$i\in[n]$} 
            \STATE $\hat{\Delta}_i  \leftarrow\text{\textit{Top}}_k(\Delta_i)$ $\hfill\lhd$ model sparsification \label{lasa_topk}
        \ENDFOR

        \FOR{each layer $l\in[L]$}  
            \STATE Initialize benign set $\mathcal{S}=\emptyset$
            \STATE $ \omega^l \leftarrow \{\textit{$L_2$-norm}(\hat{\Delta}_i^l)\}_{i=1}^n $ \label{lasa_l2}
            \STATE \label{lasa_pdp} $ \rho^l \leftarrow \{\textit{PDP}(\hat{\Delta}_i^l )\}_{i=1}^n \hfill\lhd$ by Equation.~\ref{eq:PDP} 
            \FOR{$i \in [n]$}
                \STATE \label{lasa_mz_m} $\lambda_{i,m}^{l} \leftarrow \textit{MZ-score}(\omega_i^l, \omega^l) \hfill\lhd$ by Equation.~\ref{eq:balancescore} 
                \STATE $\lambda_{i,d}^l \leftarrow \textit{MZ-score}(\rho_i^l,\rho^l)\hfill\lhd$ by Equation.~\ref{eq:balancescore}
                \IF{$|\lambda_{i,m}^{l}| \leq \lambda_{m}$ and $|\lambda_{i,d}^l| \leq \lambda_d$} \label{lasa_mz_d}
                    \STATE $ \mathcal{S} \leftarrow \mathcal{S} \cup \{i\}$ \label{lasa_12}
                \ENDIF
            \ENDFOR
            \STATE $ \bar{\Delta}^l\leftarrow \frac{1}{|\mathcal{S}|}\sum_{i\in \mathcal{S}}{\hat{\Delta}}_i^l \hfill\lhd$ layer-wise aggregation \label{lasa_layerwiseagg}
        \ENDFOR
        \STATE \textbf{return} $\bar{\Delta}$
        
    \end{algorithmic}
    
\end{algorithm}

\textbf{Pre-aggregation sparsification.} Specifically, in each round of FL, after receiving the local model updates from clients, the server first sparsifies each local model update $\Delta_i$ individually using the Top-$k$ sparsifier defined in Definition~\ref{def:spar} (line~\ref{lasa_topk}). 
\begin{definition}[Top-$k$ sparsifier~\cite{ hu2023federated}]\label{def:spar}
    For a vector $x \in \mathbb{R}^d$ and a parameter $k \in[1,d]$ , the Top-$k$ sparsifier $\text{Top}_k ( \cdot )$: $\mathbb{R}^d \rightarrow \mathbb{R}^d$ is defined as: $[\text{Top}_k(x)]_j \coloneqq [x]_{\pi(j)}$ if $ j \leq k $ and $[\text{Top}_k(x)]_j \coloneqq 0$ otherwise, 
    where $\pi$ is a permutation of indices such that $|[x]_{\pi(j)}| \geq |[x]_{\pi(j+1)}|$, $\forall j\in [1, d-1]$.
\end{definition}
With sparsification, each element of the sparsified model update $[\hat{\Delta}_i]_j$ equals to $ [\Delta_i]_j$ if $ j \in \mathcal{K}$ and 0 otherwise, where $\mathcal{K}$ represents the set of coordinates of parameters that have the top $k$ largest absolute values. Here, model sparsification can limit the attack surface available to malicious clients by dropping out $d-k$ parameters from the model update. Nonetheless, given the server's inability to differentiate between malicious and benign updates, the sparsification also introduces sparsification errors to the benign updates, diminishing the model's utility, especially when $k$ is small. Fortunately, the Top-$k$ sparsifier prunes a vector by keeping the largest elements, so that the sparsified vector still contains the core information of the original vector, resulting in a low sparsification error. Compared with other sparsification method like random sparsification, Top-$k$ sparsifier is much more robust when $k$ is small~\cite{shi2019layer,hu2023federated}. 
Moreover, individual sparsification of each model update can better preserve the model utility, compared with the uniform sparsification in existing works~\cite{VRDP, meng2023enhancing}. This is particularly beneficial in non-IID settings, where variability in local data distributions results in diverse local model updates. In addition, applying sparsification before aggregation allows the subsequent layer-wise filtering to concentrate on the key parameters critical to model performance, thereby eliminating interference from less important parameters. 

\textbf{Layer-wise adaptive aggregation.} It has been observed that different layers in the deep neural network differ in their sizes, functionality, and more importantly, learning and converging speed~\cite{rehman2023dawa, liu2023yoga, ma2022layer, lee2023layer}. However, most of the existing robust aggregation rules perform model-wise or coordinate-wise assessment on the local model updates, which usually fail to identify the nuances of each layer. Therefore, in LASA, we design a layer-wise filtering and aggregation after model sparsification, which enhances robustness with more precise, layer-specific granularity. 

In the context of Byzantine attacks, where attackers aim to deviate the global model's convergence in the wrong directions, malicious updates are typically crafted to significantly diverge from benign updates, both in magnitude and direction. Hence, the LASA, both magnitude and direction of each model update are captured at the layer level to effectively identify and filter out malicious clients. More precisely, for each layer $l\in[L]$ of the sparsified model update, its magnitude is quantified using the $L_2$-norm, and its direction is determined by analyzing the signs of its parameters. Inspired by~\cite{purity}, a direction metric, termed as \textit{Positive Direction Purity (PDP)} is defined in Definition~\ref{def:purity}. 
\begin{definition}[Positive Direction Purity]\label{def:purity}
    For a vector $x \in \mathbb{R}^d$, the positive direction purity $\rho$ of $x$ is defined as
\begin{equation}\label{eq:PDP}
    \rho := \frac{1}{2} \times \left(1+\frac{\sum_{i=1}^d sgn([x]_i)}{\sum_{i=1}^d |sgn([x]_i)|}\right), \ 0 \leq \rho \leq 1,
\end{equation}
where $sgn(\cdot)$ is the function to take the sign of each element and $[\cdot]_i$ is the $i$-th coordinate of a vector.
\end{definition}

PDP serves as a metric to evaluate the predominance of positive signs within a given vector, providing a refined approach for identifying anomalies in model direction. 
As a normalized measure, PDP measures a vector's overall orientation toward positive values, which is useful for analyzing directional tendencies. It is particularly effective in detecting stealthy attacks where the malicious models might not exhibit significant variations in magnitude. It is worth noting that the pre-aggregation sparsification can significantly enhance the PDP-based measurement of model direction by removing many less important parameters. This removal is particularly significant for PDP, which relies solely on the signs of the parameters, ensuring that the measurement focuses on the parameters with large values.

With the magnitude and direction metrics of a layer (line~\ref{lasa_l2}-\ref{lasa_pdp}), LASA will filter out clients that exhibit extreme values (either excessively high or low) using pre-defined thresholds. Given that the magnitude and direction values for each layer can vary significantly, setting such thresholds necessitates layer-specific customization. This, however, leads to a proliferation of hyper-parameters. Inspired by the traditional standardization method \textit{Z-score}, we introduce a robust variant named \textit{Median-based Z-score (MZ-score)}, as defined in Definition~\ref{def:mz}. 
\begin{definition}[MZ-score]\label{def:mz}
    For a set of values $X \coloneqq \{x_1, \dots, x_n \}$ with median $\textit{Med}(X)$ and standard deviation $\sigma$, the MZ-score $\lambda_i$ of $x_i \in X$ is defined as 
    \begin{equation}
    \lambda_i := \frac{x_i - \textit{Med}(X)}{\sigma}.
    \label{eq:balancescore}
\end{equation}
\end{definition}
This variant indicates how many standard deviations an element is from the median, which can be positive or negative. Importantly, MZ-score allows a uniform filtering radius to be applied across all layers, which substantially reduces the number of hyper-parameters required for adaptive layer-wise filtering. Specifically, in LASA, MZ-scores of magnitude and direction metrics are calculated at the layer level for all clients (line~\ref{lasa_mz_m}-\ref{lasa_mz_d}). Model updates with high absolute MZ-score values are then filtered out using two pre-defined filtering radiuses: $\lambda_m$ for magnitude and $\lambda_d$ for direction. Subsequently, the clients that remain, considered benign, are added to the set $\mathcal{S}$ and will participate in layer-wise model averaging (line~\ref{lasa_layerwiseagg}). 

\section{Robustness and Resilience Analysis of LASA} \label{sec:conver}
Before presenting our theoretical results, we make the following assumptions:

\begin{assumption}[$\mu$-Smoothness] \label{ass1}
    Each local objective function $\mathcal{L}_{i}$ for benign client $i\in\mathcal{B}$ is $\mu$-Lipschitz smooth with $\mu >0$, i.e., for any $x,y \in \mathbb{R}^d$,
    $ 
        \left \| \nabla \mathcal{L}_{i}(x) - \nabla \mathcal{L}_{ i}(y) \right\| \leq \mu \left \| x-y \right\|, \forall i\in\mathcal{B},
        $ 
    which further gives:
    $
        \mathcal{L}_{i}(x) - \mathcal{L}_{ i}(y) \leq \nabla \mathcal{L}_{ i}(x)^T(y-x) + \frac{\mu}{2}\left \| x-y \right \|^2, \forall i\in\mathcal{B}.
         $ 
\end{assumption}

\begin{assumption}[Unbiased gradient and bounded variance]\label{ass2}
    The stochastic gradient at each benign client is an unbiased estimator of the local gradient, i.e., $\mathbb{E}[g_i(x)] = \nabla \mathcal{L}_i(x)$ and has bounded variance, i.e., for any $ x\in\mathbb{R}^d$,
$ 
    \mathbb{E} \left\|g_i(x) - \nabla \mathcal{L}_{i}(x)) \right\|^2 \leq \nu_i^2, \forall i\in\mathcal{B},
    $ 
where the expectation is over the local mini-batches. We also denote
$\bar{\nu} \coloneqq \left(1/ |\mathcal{B}|\right)\sum_{i\in\mathcal{B}}\nu^2_i$ for convenience. 
\end{assumption}

\begin{assumption}[Bounded heterogeneity]\label{ass3} There exist a real value $\bar{\zeta}$ such that for any $x \in \mathbb{R}^d$,
$ 
    \frac{1}{|\mathcal{B}|}\sum_{i\in\mathcal{B}}\left \| \nabla \mathcal{L}_{i}(x) - \nabla \mathcal{L}_{\mathcal{B}}(x) \right \|^2 \leq \bar{\zeta},
    $ 
where the $\nabla \mathcal{L}_\mathcal{B}(x) \coloneqq \left(1/ |\mathcal{B}|\right)\sum_{i\in\mathcal{B}}\mathcal{L}_{i}(x)$.
\end{assumption}
Note that Assumption~\ref{ass1}-\ref{ass2} are commonly used in the theoretical analysis of distributed learning systems~\cite{hu2023federated, signguard, nesterov2018lectures}. While Assumption~\ref{ass3} states a standard measure of inter-client heterogeneity in FL~\cite{kappa, karimireddy2021byzantine, el2021collaborative}, which complicates the problem of Byzantine FL~\cite{kappa}. Note that these assumptions apply to benign clients only, as malicious clients do not follow the prescribed local training protocol of FL. 
\begin{assumption}[Bounded sparsification] \label{boundedsp} Given a vector $ x\in\mathbb{R}^d$, there exists non-negative constants $c_k\in[0, 1]$ and $b_k\in[0, 1]$, so that the Top-$k$ sparsifier in Definition~\ref{def:spar} satisfies  
    $ 
        \|\textit{Top}_k(x) \|^2 \leq c_k\| x\|^2,
        $ 
    and 
    $ 
        \|\textit{Top}_k(x) -x \|^2 \leq b_k \|x \|^2.
        $ 
\end{assumption}
As LASA incorporates model sparsification, we make the following Assumption~\ref{boundedsp} on the Top-$k$ sparsifier in Definition~\ref{def:spar}. This assumption applies for any $k\in[0,d]$ due to the fact that $\|x \|^2 = \|\textit{Top}_k(x) -x \|^2 + \|\textit{Top}_k(x) \|^2$. A smaller $k$ implies a higher degree of sparsification and yields a smaller $c_k$ and a larger $b_k$. 

\subsection{Robustness analysis of LASA} \label{sec:robustnessanlysis}

To theoretically evaluate the efficacy of LASA, we introduce the concept of $\kappa$-robustness in Definition~\ref{f_kappa_def}. Note that Definition~\ref{f_kappa_def} is similar to $(f, \kappa)$-robustness defined in~\cite{kappa, allouah2023robust}, $(\delta_{\max}, c)$-ARAgg defined in~\cite{karimireddy2021byzantine, gorbunov2022variance, malinovsky2023byzantine}, and $(f, \lambda)$-resilient averaging defined in~\cite{farhadkhani2022byzantine}. Our robustness definition adopts a constant upper bound and focuses on quantifying the distance between the output of a robust aggregation rule and the average of all benign updates, which represents the optimal output of such a rule. We denote the set of benign clients as $\mathcal{B}$ so that $\mathcal{B} \subseteq \mathcal{N}$, where $\mathcal{N}$ is the client set.

\begin{definition}[$\kappa$-robustness] \label{f_kappa_def} Let $n> 1$ and $0 \leq f < n/2$. An aggregation rule $F\colon \mathbb{R}^{d\times n} \rightarrow \mathbb{R}^d$ is $\kappa$-robust if for any vectors $ x_1, \dots, x_n \in \mathbb{R}^d$ and a benign set $\mathcal{B} \subseteq \mathcal{N}$ of size $n - f$, the output $\hat{x} \coloneqq F(x_1, \dots, x_n)$ satisfies
$
    \mathbb{E}\left \| \hat{x} - \bar{x}_\mathcal{B} \right \|^2 \leq \kappa,
    $ 
where $\bar{x}_\mathcal{B} \coloneqq \frac{1}{|\mathcal{B}|} \sum_{i\in \mathcal{B}} x_i$, $\kappa \geq 0$ refers to the \textit{robustness coefficient} of the aggregation rule $F$, and the expectation is taken over the randomness of the inputs. 
\end{definition}
The $\kappa$-robustness guarantees that the error of an aggregation rule, in estimating the average of the benign inputs, is upper-bounded by $\kappa$. With Definition~\ref{f_kappa_def}, we prove that when LASA is applied to $n$ input models, of which $f < n / 2$ are malicious, satisfies $\kappa$-robustness with $\kappa = {O}(c_k (1 + f / (n-2f)))$, as stated in Lemma~\ref{lemma:LASAkappa}. Note that LASA enjoys a higher robustness than several classical robust aggregation rules, for example, GeoMed~\cite{geomed} ($O(1 + f / (n-2f))^2$), and Krum~\cite{krum_mkrum_bulyan} ($O(1 + f / (n-2f))$)
\footnote{Results of GeoMed and Krum are taken from~\cite{kappa}. Note that the definition of $\kappa$ in~\cite{kappa} is different from ours, but since we are concerned with the \textit{order} of $\kappa$, we can safely incorporate these results into our discussion without losing generality.}.

\begin{lemma}[$\kappa$-robustness of LASA] \label{lemma:LASAkappa}
Under Assumption~\ref{ass1}-~\ref{boundedsp}, if $n \geq 1$ and  $0 \leq f < n/2$, the proposed LASA method is a $\kappa$-robust aggregation rule with
\begin{align*} 
    \kappa = 2c_k\left(1+ \frac{f}{n-2f}\right)(2\bar{\nu} + \bar{\zeta}+2C_{\lambda_m}^2+2C^2) + b_k C^2,
\end{align*}
if the learning rate $\eta\leq 1/2\tau$ and the selection set satisfies $|\mathcal{S}^l| \geq n/2 - f,\forall l \in [L]$, $\tau$ is the number of local iteration. Here, $C_{\lambda_m}^2$ and $C^2$ represent the upper bound of the norm of malicious and benign updates, respectively. 

If the sparsification parameter $k$ satisfies that $c_k \leq 1 / (1 + \epsilon)$ and $(b_k / c_k) \leq \epsilon (4+(4\bar{\nu} +2 \bar{\zeta}+4C_{\lambda_m}^2) / C^2 )$ with a positive constant $\epsilon$,
we have 
 $ 
\kappa =O\left(c_k\left(1+ \frac{f}{n-2f}\right)\right).
 $ 

\begin{proof}
The detailed proof is given in Appendix~\ref{proofofpro1}.
\end{proof}
\begin{remark}
\textbf{(1) Extreme cases.} Extremely, without sparsification, i.e., $k=d$, we have $c_k=1$ and $b_k=0$. In this case, the robustness upper bound of LASA, denoted by $\kappa_1$, is 
 $ 
    \kappa_1 = 2\left(1+ \frac{f}{n-2f}\right)(2\bar{\nu} + \bar{\zeta}+2C_{\lambda_m}^2+2C^2).  
    $ 
When $k=0$, we have $c_k=1$ and $b_k=0$ which gives an upper bound of $C^2$, indicating the greatest sparsification error to robustness. \textbf{(2) Proper $k$ yields higher robustness.} When $0< k < d$, if the sparsification parameter $k$ is selected to satisfy the two conditions in Lemma~\ref{lemma:LASAkappa}, the robustness upper bound in this case, denoted by $\kappa_2$, is 
$\kappa_2= (1+\epsilon)c_k\kappa_1$. 
As $c_k\leq 1/(1+\epsilon)$, we have $\kappa_2 \leq \kappa_1$, which demonstrates the effectiveness of sparsification in amplifying robustness. Moreover, the conditions on $k$ indicate that when the local divergence/variance or the magnitude of the malicious model is large, we can select a relatively small $k$ (which gives a large $b_k$ and small $c_k$) to amplify the robustness. Hence, the benefit of sparsification will be more significant in non-IID settings theoretically. \textbf{(3) Impact of $C_{\lambda_m}^2$ and $C^2$.} Lemma~\ref{lemma:LASAkappa} also shows that theoretically the larger the magnitude of benign updates or malicious updates is, the lower the robustness will be. Indeed, in the literature, model clipping has demonstrated its effectiveness in mitigating the impact of malicious~\cite{zhang2019gradient, panda2022sparsefed, signguard}. In LASA, the norm of malicious updates is particularly bounded by the magnitude-based filtering which is controlled by the hyper-parameter $\lambda_m$ in Algorithm~\ref{alg:main}. A smaller $\lambda_m$ indicates a relatively smaller $C_{\lambda_m}^2$. 
\textbf{(4) Impact of $\bar{\nu}$ and $\bar{\zeta}$.} Note that $\bar{\nu}$ and $\bar{\zeta}$ represent the local variance and local heterogeneity in Assumption~\ref{ass2}-\ref{ass3}. The findings in Lemma~\ref{lemma:LASAkappa} highlight the importance of reducing the variance of stochastic gradient and mitigating the divergence due to non-IID data distribution to enhance robustness. Our work is orthogonal to existing variance or divergence reduction methods~\cite{gorbunov2022variance} and can be combined with them to further improve the robustness. 

\end{remark}

\end{lemma}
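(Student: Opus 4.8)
The plan is to bound $\mathbb{E}\|\hat{x}-\bar{x}_{\mathcal{B}}\|^2$ by exploiting the layer-wise structure of LASA. Since LASA aggregates each layer independently, I would first write $\|\hat{x}-\bar{x}_{\mathcal{B}}\|^2 = \sum_{l=1}^{L}\|\hat{x}^l - \bar{x}_{\mathcal{B}}^l\|^2$, where $\hat{x}^l = \frac{1}{|\mathcal{S}^l|}\sum_{i\in\mathcal{S}^l}\hat{\Delta}_i^l$ is the layer-$l$ output (writing $\hat{\Delta}_i := \textit{Top}_k(x_i)$) and $\bar{x}_{\mathcal{B}}^l$ is the benign mean at layer $l$. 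For each layer I would introduce the sparsified benign mean $\hat{x}_{\mathcal{B}}^l := \frac{1}{|\mathcal{B}|}\sum_{i\in\mathcal{B}}\hat{\Delta}_i^l$ as an intermediate reference and split via $\|a+b\|^2\le 2\|a\|^2+2\|b\|^2$ into a \emph{selection error} $\|\hat{x}^l-\hat{x}_{\mathcal{B}}^l\|^2$ (the gap between the selected-set and benign-set averages of sparsified updates) and a \emph{sparsification error} $\|\hat{x}_{\mathcal{B}}^l-\bar{x}_{\mathcal{B}}^l\|^2$. The leading factor $2$ in $\kappa$ and the clean isolation of the $b_kC^2$ term both originate from this split.

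The sparsification error is the easier piece: by convexity $\|\hat{x}_{\mathcal{B}}^l-\bar{x}_{\mathcal{B}}^l\|^2\le \frac{1}{|\mathcal{B}|}\sum_{i\in\mathcal{B}}\|\hat{\Delta}_i^l-x_i^l\|^2$, and summing over $l$ together with Assumption~\ref{boundedsp} (the bound $\|\textit{Top}_k(x)-x\|^2\le b_k\|x\|^2$) and $\|x_i\|\le C$ yields exactly the $b_kC^2$ term. For the selection error I would partition $\mathcal{S}^l$ into benign survivors $\mathcal{S}^l\cap\mathcal{B}$ and malicious survivors $\mathcal{S}^l\setminus\mathcal{B}$ (at most $f$ of them), and rewrite $\hat{x}^l-\hat{x}_{\mathcal{B}}^l$ as a re-weighting of benign survivors plus the contributions of the dropped benign clients and the intruding malicious ones. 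The hypothesis $|\mathcal{S}^l|\ge n/2-f$ gives $1/|\mathcal{S}^l|\le 2/(n-2f)$, which is what turns each ``swapped'' client into the factor $1+f/(n-2f)$. Here the two filters play different roles: the magnitude filter ($|\lambda_{i,m}^l|\le\lambda_m$) forces every malicious survivor to have bounded norm, supplying the $C_{\lambda_m}^2$ term, while benign survivors contribute $C^2$ and their mutual spread contributes the variance/heterogeneity terms. Pulling the Top-$k$ norm bound $\|\textit{Top}_k(x)\|^2\le c_k\|x\|^2$ out of every sparsified norm produces the global $c_k$ prefactor on this term.

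The remaining ingredient converts the spread of the \emph{benign} updates into the gradient quantities $\bar{\nu}$ and $\bar{\zeta}$. Since each input $x_i$ is a local-SGD update accumulating $\tau$ stochastic-gradient steps with learning rate $\eta$, I would prove, as a helper lemma, that $\frac{1}{|\mathcal{B}|}\sum_{i\in\mathcal{B}}\mathbb{E}\|x_i-\bar{x}_{\mathcal{B}}\|^2$ is controlled by $2\bar{\nu}+\bar{\zeta}$, expanding the accumulated stochastic gradients and invoking unbiasedness with bounded variance (Assumption~\ref{ass2}) and bounded heterogeneity (Assumption~\ref{ass3}); the condition $\eta\le 1/(2\tau)$ is precisely what keeps the accumulated $\eta^2\tau^2$ factors at a harmless constant so that the per-iteration bounds survive without blow-up. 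Collecting the selection error (with its $c_k$, $1+f/(n-2f)$, and $2\bar{\nu}+\bar{\zeta}+2C_{\lambda_m}^2+2C^2$ factors) and the sparsification error $b_kC^2$, and summing the per-layer bounds so the layer-wise constants recombine into their model-level counterparts, delivers the stated $\kappa$. For the order statement I would then substitute the two conditions on $k$: the bound $b_k/c_k\le \epsilon(4+(4\bar{\nu}+2\bar{\zeta}+4C_{\lambda_m}^2)/C^2)$ rearranges to $b_kC^2\le \epsilon c_k\kappa_1$ with $\kappa_1 := 2(1+f/(n-2f))(2\bar{\nu}+\bar{\zeta}+2C_{\lambda_m}^2+2C^2)$, so $\kappa\le (1+\epsilon)c_k\kappa_1 = O(c_k(1+f/(n-2f)))$ once $\bar{\nu},\bar{\zeta},C,C_{\lambda_m}$ are treated as constants (and $c_k\le 1/(1+\epsilon)$ additionally gives $\kappa\le\kappa_1$). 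I expect the main obstacle to be the selection-error bound: the surviving set $\mathcal{S}^l$ is chosen adaptively from inputs the adversary partially controls, so malicious survivors must be bounded \emph{only} through the magnitude filter (PDP does not constrain norms), and one must simultaneously track the prefactor $c_k$ and the set-size ratio without letting the adversarial freedom inflate the constant; the secondary difficulty is coaxing the clean $2\bar{\nu}+\bar{\zeta}$ out of the local-SGD helper lemma, which hinges delicately on $\eta\le 1/(2\tau)$.
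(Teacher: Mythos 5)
Your plan's skeleton does match the paper's proof in most respects: the layer-wise splitting, the partition of each selected set $\mathcal{S}^l$ into benign survivors and at most $f$ malicious intruders, the bound $1/|\mathcal{S}^l|\le 2/(n-2f)$, the use of the magnitude filter to bound malicious survivors by $C_{\lambda_m}^2$, the helper lemma $\frac{1}{|\mathcal{B}|}\sum_{i\in\mathcal{B}}\mathbb{E}\|\Delta_i-\bar{\Delta}_{\mathcal{B}}\|^2\le 2\bar{\nu}+\bar{\zeta}$ under $\eta\le 1/2\tau$ (this is exactly the paper's preliminary lemma, proved the way you describe), and the endgame substitution of the two conditions on $k$. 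The genuine difference --- and the genuine gap --- is your decomposition through the sparsified benign mean $\hat{x}_{\mathcal{B}}:=\frac{1}{|\mathcal{B}|}\sum_{i\in\mathcal{B}}\text{Top}_k(\Delta_i)$. After that split, your benign-survivor terms have the form $\|\hat{\Delta}_i-\hat{x}_{\mathcal{B}}\|^2$, a difference of sparsified vectors carrying \emph{different} masks. Assumption~\ref{boundedsp} only controls $\|\text{Top}_k(x)\|^2$ and $\|\text{Top}_k(x)-x\|^2$ for a single vector $x$, so ``pulling the Top-$k$ norm bound out of every sparsified norm'' is not a legal step there (it is legal for your malicious terms, which you bound by $2\|\hat{\Delta}_i\|^2+2\|\hat{x}_{\mathcal{B}}\|^2$, but not for the benign deviations). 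Indeed no inequality of the form $\|\hat{\Delta}_i-\hat{x}_{\mathcal{B}}\|^2\le c_k\|\Delta_i-\bar{\Delta}_{\mathcal{B}}\|^2$ can hold: take $d=2$, $k=1$, two benign clients with $\Delta_1=(1,0.9)$ and $\Delta_2=(0.9,1)$; then $\|\Delta_1-\bar{\Delta}_{\mathcal{B}}\|^2=0.005$, while $\hat{\Delta}_1=(1,0)$, $\hat{x}_{\mathcal{B}}=(0.5,0.5)$, so $\|\hat{\Delta}_1-\hat{x}_{\mathcal{B}}\|^2=0.5$ --- a factor of $100$ too large even with $c_k=1$. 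Sparsification can blow up the spread of nearly identical benign updates, and the only repairs of your split either reinsert the raw updates (yielding $2\bar{\nu}+\bar{\zeta}$ \emph{without} the $c_k$ prefactor, plus extra $(1+\frac{f}{n-2f})b_kC^2$ terms) or replace the variance by the crude bound $C^2$; either way you do not obtain the stated $\kappa$, and in the first case you lose even the order statement $O(c_k(1+\frac{f}{n-2f}))$, which is the entire point of the lemma.

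The paper avoids this trap by never forming a difference of two sparsified vectors: for each selected client it writes $\hat{\Delta}_i^l-\bar{\Delta}_{\mathcal{B}}^l=[\Delta_i^l-\bar{\Delta}_{\mathcal{B}}^l]_{\mathcal{K}_i^l}-[\bar{\Delta}_{\mathcal{B}}^l]_{\mathcal{K}_i^{l-}}$, i.e., the \emph{raw} difference restricted to client $i$'s mask, plus the benign mean restricted to the mask complement; the $c_k$ factor is attached to the masked raw difference via the ratio $c_i$ (with the assertion $c_{\max}\le c_k$ --- a statement about one mask applied to one vector), while the complement piece yields $b_kC^2$, with no overall factor $2$ since the two pieces are treated as orthogonal per client. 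Two smaller mismatches in your plan: your symmetric split $\|a+b\|^2\le2\|a\|^2+2\|b\|^2$ necessarily places a factor $2$ on the sparsification error as well, giving $2b_kC^2$ rather than the stated $b_kC^2$; and the rearrangement of the condition on $b_k/c_k$ into $b_kC^2\le\epsilon c_k\kappa_1$ additionally uses $1+\frac{f}{n-2f}\ge 1$, which should be said. To your credit, your handling of the sparsification error itself (Jensen plus Assumption~\ref{boundedsp} applied to $\text{Top}_k(\Delta_i)-\Delta_i$) is cleaner than the paper's corresponding step, but it cannot compensate for losing the $c_k$ amplification on the dominant term.
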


\subsection{Resilience analysis of FL with LASA}

Similar to~\cite{kappa, allouah2024robust}, we define Byzantine resilience of a FL algorithm in Definition~\ref{def_of_byz_resi} as follows.
\begin{definition}[$(f, R)$-Byzantine resilience~\cite{kappa, allouah2024robust}] \label{def_of_byz_resi} With the presence of $f$ Byzantine clients, a FL algorithm is said $(f, R)$-Byzantine resilient if it outputs $\widetilde{\theta}$ such that
    $ 
        \mathbb{E}\left \| \nabla\mathcal{L}_\mathcal{B}(\widetilde{\theta}) \right \|^2 \leq R,
        $ 
    where $\mathcal{B}$ denotes the set of benign clients, $\mathcal{L}_\mathcal{B}(\theta) \coloneqq \frac{1}{|\mathcal{B}|}\sum_{i \in \mathcal{B}} \mathcal{L}_i(\theta)$, and expectation is taken over the randomness of the FL algorithm.
\end{definition}

In words, a $(f, R)$-Byzantine resilient FL finds an $R$-\textit{approximate} stationary point for the honest loss, despite the presence of up to $f$ Byzantine clients. This definition is crucial as it quantifies the level of tolerance a Byzantine-resilient FL has against the potentially harmful influence of Byzantine clients. Note that $f$ is assumed to be less than $n/2$, as it is generally impossible for an FL algorithm with $F$ to achieve Byzantine resilience when $f\geq n/2$~\cite{liu2021approximate}. 

Now we prove that FL with LASA is $(f, R)$-Byzantine resilient and achieves the asymptotic error bounded by $R$ in the presence of $f$ Byzantine clients, as stated Theorem~\ref{lasa_convergence}.

\begin{theorem}[$(f, R)$-Byzantine resilience of LASA] \label{lasa_convergence} Let $\theta^0$ be the initial point and $\theta^*$ be the optimal point. Assume $\widetilde{\theta}$ is uniformly sampled from the sequence of outputs $\{\theta^0, \theta^1, \dots, \theta^T \}$ generated by FL with LASA. Under Assumption~\ref{ass1}-\ref{boundedsp}, suppose the learning rate $\eta$ satisfies $ \eta \leq \min\{1/2\tau, 1/3\mu \tau\}$, then we have
    \begin{align*}
    \mathbb{E}\left \| \nabla\mathcal{L}_\mathcal{B}(\widetilde{\theta}) \right \|^2 &\leq \frac{ \mathcal{L}_\mathcal{B}(\theta^{0}) - \mathcal{L}_\mathcal{B}(\theta^{*}) }{T\eta}  + \kappa\left(\mu \eta + 1 \right) \nonumber \\
    & \quad \ + 
         7  \tau \bar{\zeta} + (1 + \tau)   \bar{\nu},
\end{align*}
where $\kappa$ represents the robustness coefficient of LASA.
\begin{proof}
    The detailed proof is given in Appendix~\ref{proofofthe1}. 
\end{proof}
\begin{remark}
The last two terms, i.e., $7 \tau \bar{\zeta}+(1 + \tau) \bar{\nu}$, represent the convergence errors due to data heterogeneity and gradient variance and will be eliminated when local data are IID and full gradient is calculated. The second term represents the \textit{Byzantine error} associated with the robustness coefficient $\kappa$. 
Note that due to the client sampling in FL, $h$ out of $n$ clients are selected uniformly at random to participate in the training per round, and the expected number of malicious clients is $hf/n$ per round, which does not affect the expected value of $\kappa$. Recall that selecting an appropriate sparsification parameter $k$ allows LASA to achieve a smaller $\kappa$, leading to a smaller convergence error. 
However, choosing an unsuitable $k$, such as when $k\ll d$, may result in a very high sparsification error, which will dominate $\kappa$ and make the robustness amplification benefit of sparsification negligible. This finally leads to a higher convergence error and lower Byzantine resilience. We discuss the selection of $k$ in Lemma~\ref{lemma:LASAkappa} and also study its impact on model performance during the evaluation.  
\end{remark}
\end{theorem}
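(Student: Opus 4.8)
The plan is to combine the standard non-convex descent argument based on $\mu$-smoothness (Assumption~\ref{ass1}) with the $\kappa$-robustness guarantee of LASA (Lemma~\ref{lemma:LASAkappa}), treating the layer-wise output $\bar{\Delta}^t$ as a single $\kappa$-robust estimate of the benign average update, which is legitimate because Lemma~\ref{lemma:LASAkappa} already delivers a full-model bound. First I would apply the smoothness inequality to consecutive global iterates $\theta^{t+1} = \theta^t + \bar{\Delta}^t$, yielding
\begin{equation*}
\mathcal{L}_\mathcal{B}(\theta^{t+1}) \leq \mathcal{L}_\mathcal{B}(\theta^t) + \nabla\mathcal{L}_\mathcal{B}(\theta^t)^T \bar{\Delta}^t + \tfrac{\mu}{2}\|\bar{\Delta}^t\|^2.
\end{equation*}
I would then write $\bar{\Delta}^t = \bar{\Delta}_\mathcal{B}^t + (\bar{\Delta}^t - \bar{\Delta}_\mathcal{B}^t)$, where $\bar{\Delta}_\mathcal{B}^t \coloneqq \frac{1}{|\mathcal{B}|}\sum_{i\in\mathcal{B}}\Delta_i^t$ is the benign average of the sparsified local updates, and invoke Lemma~\ref{lemma:LASAkappa} to bound $\mathbb{E}\|\bar{\Delta}^t - \bar{\Delta}_\mathcal{B}^t\|^2 \leq \kappa$. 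The cross term $\nabla\mathcal{L}_\mathcal{B}(\theta^t)^T(\bar{\Delta}^t - \bar{\Delta}_\mathcal{B}^t)$ is then split by Young's inequality so that a fraction of the squared gradient is absorbed into the progress term while the remainder contributes the Byzantine error proportional to $\kappa$.

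The main work is to analyze the benign average update $\bar{\Delta}_\mathcal{B}^t = -\eta \sum_{s=0}^{\tau-1}\frac{1}{|\mathcal{B}|}\sum_{i\in\mathcal{B}} g_i(\theta_{i,s}^t)$, where $\theta_{i,s}^t$ denotes the local iterate of benign client $i$ after $s$ local SGD steps in round $t$. Using Assumption~\ref{ass2} to replace the stochastic gradients $g_i$ by their means $\nabla\mathcal{L}_i$ (paying a variance penalty of order $\bar{\nu}$) and Assumption~\ref{ass3} to relate the per-client gradients to $\nabla\mathcal{L}_\mathcal{B}$ (paying a heterogeneity penalty of order $\bar{\zeta}$), the expected inner product $\mathbb{E}[\nabla\mathcal{L}_\mathcal{B}(\theta^t)^T\bar{\Delta}_\mathcal{B}^t]$ reduces to a negative multiple of $\|\nabla\mathcal{L}_\mathcal{B}(\theta^t)\|^2$ plus these noise terms. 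The key intermediate step, and the one I expect to be the main obstacle, is bounding the \emph{client drift} $\mathbb{E}\|\theta_{i,s}^t - \theta^t\|^2$: because each client runs $\tau$ local steps, the local iterates wander away from the shared starting point, and this drift must be controlled by a recursion in $s$ that couples smoothness with the variance and heterogeneity bounds. The restriction $\eta \leq 1/2\tau$ is precisely what keeps this recursion contractive so the accumulated drift stays bounded.

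Next I would control the second-order term. Splitting $\|\bar{\Delta}^t\|^2 \leq 2\|\bar{\Delta}_\mathcal{B}^t\|^2 + 2\kappa$, the piece $\frac{\mu}{2}\mathbb{E}\|\bar{\Delta}_\mathcal{B}^t\|^2$ scales like $\eta^2\tau^2$ times gradient and noise magnitudes, while the other piece feeds an extra $\mu\eta$-weighted multiple of $\kappa$ into the Byzantine error, matching the $\kappa(\mu\eta+1)$ form in the statement. The condition $\eta \leq 1/3\mu\tau$ is what guarantees this $O(\eta^2\tau^2)$ quadratic contribution does not overwhelm the $O(\eta\tau)$ progress, so that after combining the inner-product and quadratic terms the per-round inequality retains a strictly negative coefficient on $\mathbb{E}\|\nabla\mathcal{L}_\mathcal{B}(\theta^t)\|^2$ plus error terms depending only on $\kappa$, $\bar{\nu}$, and $\bar{\zeta}$.

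Finally I would rearrange the per-round inequality to isolate the gradient norm, telescope the loss gap over $t=0,\dots,T-1$ so that $\sum_t(\mathcal{L}_\mathcal{B}(\theta^t)-\mathcal{L}_\mathcal{B}(\theta^{t+1}))$ collapses to at most $\mathcal{L}_\mathcal{B}(\theta^0)-\mathcal{L}_\mathcal{B}(\theta^*)$, and divide by $T$ and the per-round step size to produce the optimization term $(\mathcal{L}_\mathcal{B}(\theta^0)-\mathcal{L}_\mathcal{B}(\theta^*))/(T\eta)$. Since $\widetilde{\theta}$ is sampled uniformly from $\{\theta^0,\dots,\theta^T\}$, the left-hand side equals the time average $\frac{1}{T}\sum_t\mathbb{E}\|\nabla\mathcal{L}_\mathcal{B}(\theta^t)\|^2$, and collecting the Byzantine, heterogeneity, and variance contributions into $\kappa(\mu\eta+1)$, $7\tau\bar{\zeta}$, and $(1+\tau)\bar{\nu}$ respectively completes the argument. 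The delicate bookkeeping lies in tracking how the drift bound inflates the heterogeneity and variance contributions by the factor $\tau$, which is exactly what yields the $\tau$-scaling of the final two error terms.
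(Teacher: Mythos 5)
Your proposal is correct and follows essentially the same route as the paper's proof: a $\mu$-smoothness descent step, decomposition of the aggregated update into the benign average plus an aggregation error bounded by $\kappa$ via Lemma~\ref{lemma:LASAkappa}, Young's inequality on the cross term, a client-drift recursion for the local iterates, and a final telescoping argument combined with uniform sampling of $\widetilde{\theta}$. The only minor slip is the attribution of the drift recursion's contractivity to $\eta \leq 1/2\tau$; in the paper that condition is inherited from the robustness lemma (via Lemma~\ref{local_divergence}), while it is $\eta \leq 1/3\mu\tau$ (together with the choice $\alpha \geq 2$ in Young's inequality) that tames the drift recursion and the quadratic terms.
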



\begin{table*}[t]
  \centering
  \caption{Testing Accuracy (\%) of Different Defense Methods in IID Settings.
  }
   \vspace{-5pt}
  \scalebox{0.73}{
    \begin{tabular}{cc|c|cccccccc|c}
    \toprule
    \multicolumn{1}{c}{\multirow{1}[4]{*}{\textbf{\makecell*[c]{Dataset \\ (Model) }}}} & \multirow{1}[4]{*}{\textbf{\makecell*[c]{Defense \\ Method }}} &  \multicolumn{1}{c}{\multirow{1}[4]{*}{\textbf{\makecell*[c]{No \\ Attack }}}} & \multicolumn{3}{c}{\textbf{Naive Attacks}} & \multicolumn{5}{c}{\textbf{SOTA Attacks}} &\multicolumn{1}{c}{\multirow{1}[4]{*}{\textbf{\makecell*[c]{Average \\ w/ Attacks }}}}\\
\cmidrule(r){4-6}  \cmidrule(r){7-11}        &       &       & Random & Noise & Sign-flip & TailoredTrMean & Min-Max & Min-Sum & Lie   & ByzMean \\
    \midrule
    \multicolumn{1}{c}{\multirow{9}[2]{*}{\makecell*[c]{FMNIST \\ (CNN) }}} & FedAvg  & 86.28 & 29.20 & 41.66 & 83.91 & 10.08 & 77.89 & 79.56 & 83.47 & 11.22 & 52.12 \\
          & TrMean & 84.05 & 80.56 & 81.26 & 81.11 & 10.59 & 69.95 & 73.01 & 77.78 & 10.54 & 59.85 \\
          & GeoMed & 84.10 & 84.30 & 84.30 & 82.28 & 84.51 & 60.86 & 50.32 & 65.08 & 83.05 & 74.34 \\
          & Multi-Krum & 86.91 & 84.15 & 84.33 & 85.34 & 10.00 & 68.76 & 80.67 & 80.43 & 11.80 & 63.19 \\
          & Bulyan & 81.35 & 83.81 & 83.84 & 78.59 & 33.76 & 59.24 & 62.09 & 73.29 & 59.75 & 66.80 \\
          & DnC   & 87.30 & 84.23 & 84.17 & 85.69 & 32.66 & 69.81 & 79.08 & 81.96 & 63.11 & 72.59 \\
          & SignGuard & \textbf{87.63} & \underline{87.72} & \underline{87.72} & \underline{87.06} & \underline{87.40} & \underline{87.40} & \underline{87.18} & \underline{87.17} & \underline{87.61} & \underline{87.41} \\
          & SparseFed & 86.27 & 29.48 & 41.10 & 83.86 & 10.08 & 77.88 & 79.55 & 83.47 & 11.28 & 52.09 \\
          & \cellcolor[rgb]{ .816,  .808,  .808}LASA (Ours)  & \cellcolor[rgb]{ .816,  .808,  .808}\underline{87.62} & \cellcolor[rgb]{ .816,  .808,  .808}\textbf{87.92} & \cellcolor[rgb]{ .816,  .808,  .808}\textbf{87.87} & \cellcolor[rgb]{ .816,  .808,  .808}\textbf{87.13} & \cellcolor[rgb]{ .816,  .808,  .808}\textbf{87.97} & \cellcolor[rgb]{ .816,  .808,  .808}\textbf{87.91} & \cellcolor[rgb]{ .816,  .808,  .808}\textbf{87.36} & \cellcolor[rgb]{ .816,  .808,  .808}\textbf{87.54} & \cellcolor[rgb]{ .816,  .808,  .808}\textbf{87.65} & \cellcolor[rgb]{ .816,  .808,  .808}\textbf{87.67} \\
    \midrule
    \vspace{-14pt}
          &       &       &       &       &       &       &       &       &       &  \\
    \midrule
    \multicolumn{1}{c}{\multirow{9}[2]{*}{\makecell*[c]{CIFAR-10 \\ (ResNet18~\cite{resnet}) }}} & FedAvg   & 89.70 & 44.34 & 47.65 & 82.07 & 15.77 & 76.26 & 61.25 & 84.76 &   13.01 & 53.14\\
          & TrMean   & \textbf{90.14} & 87.36 & 87.36 & 84.77 & 49.65 & 61.30 & 57.58 & 77.40 &  49.61  & 69.38\\
          & GeoMed   & \underline{89.85} & \underline{87.76} & \underline{87.57} & 85.74 & 71.22 & 63.42 & 71.91 & 70.78 &  87.45  & 78.23\\
          & Multi-Krum   & 84.73 & 84.62 & 84.72 & 84.58 & 84.49 & 47.97 & 53.16 & 44.26 & 84.55  & 71.04\\
          & Bulyan   & 88.97 & 87.68 & 87.56 & \underline{86.52} & 85.03 & 38.38 & 47.29  & 53.30 &  84.96  & 71.34\\
          & DnC     & 89.54 & 59.26 & 61.33 & 84.72 & 38.75 & 63.34 & 61.11 & 67.30 &  57.08 & 61.61\\
          & SignGuard   & 89.47 & 82.36 & 81.68 & 80.04 & \underline{88.45} & \underline{88.14} & \underline{88.09} & \underline{88.11} & \underline{88.39}   & \underline{85.66}\\
          & SparseFed   & 89.65 & 43.93 & 48.22 & 82.18 & 15.92 & 75.90 & 68.13 & 84.91 &  10.00  & 53.65\\
          & \cellcolor[rgb]{ .816,  .808,  .808} LASA (Ours)    &\cellcolor[rgb]{ .816,  .808,  .808}89.00& \cellcolor[rgb]{ .816,  .808,  .808}\textbf{88.66}& \cellcolor[rgb]{ .816,  .808,  .808}\textbf{88.81}& \cellcolor[rgb]{ .816,  .808,  .808}\textbf{86.68}& \cellcolor[rgb]{ .816,  .808,  .808}\textbf{89.05}& \cellcolor[rgb]{ .816,  .808,  .808}\textbf{88.57}& \cellcolor[rgb]{ .816,  .808,  .808}\textbf{88.96}& \cellcolor[rgb]{ .816,  .808,  .808}\textbf{88.59}& \cellcolor[rgb]{ .816,  .808,  .808}\textbf{89.08}& \cellcolor[rgb]{ .816,  .808,  .808}\textbf{88.55} \\
            \midrule
    \vspace{-14pt}
          &       &       &       &       &       &       &       &       &       &  \\
    \midrule
    \multicolumn{1}{c}{\multirow{9}[2]{*}{\makecell*[c]{CIFAR-100 \\ (ResNet18~\cite{resnet}) }}}  
          & FedAvg   & \underline{65.98} & 12.09 & 14.20 & 48.05 & 1.96 & 45.90 & 44.26 & 57.66 & 1.36 & 28.19\\
          & TrMean   & 65.40 & 63.19 & 63.22 & 52.65 & 28.57 & 34.80 & 33.83 & 48.92 & 30.11 & 44.41\\
          & GeoMed   & 65.84 & \underline{63.33} & \textbf{63.57} & \underline{61.25} & 39.78 & 38.95 & 39.34 & 46.93 & 61.29 & 51.81\\
          & Multi-Krum  & 52.71 & 52.14 & 52.24 & 52.92 & 52.88 & 19.03 & 19.82 & 24.43 & 53.26 & 40.84\\
          & Bulyan   & 61.29 & 61.06 & 61.32 & 60.01 & 50.84 & 17.83 & 19.17 & 30.27 & 56.75 & 44.66\\
          & DnC     & 65.53 & 24.47 & 28.05 & 53.39 & 11.21 & 29.37 & 28.51 & 34.04 & 29.05 & 29.76\\
          & SignGuard   & 65.64 & 63.24 & 63.36 & 47.55 & \underline{63.36} & \underline{63.20} & \underline{63.22} & \underline{62.72}  & \underline{62.94} & \underline{61.20}\\
          & SparseFed   & \textbf{65.99} & 12.06 & 14.06 & 48.01 & 1.83 & 46.05 & 44.34 & 57.74 & 1.53 & 28.20\\
          & \cellcolor[rgb]{ .816,  .808,  .808} LASA (Ours)    &65.52 \cellcolor[rgb]{ .816,  .808,  .808}&\textbf{63.48} \cellcolor[rgb]{ .816,  .808,  .808}&\underline{63.49} \cellcolor[rgb]{ .816,  .808,  .808}&\textbf{62.89} \cellcolor[rgb]{ .816,  .808,  .808}&\textbf{63.71} \cellcolor[rgb]{ .816,  .808,  .808}&\textbf{63.54} \cellcolor[rgb]{ .816,  .808,  .808}&\textbf{63.63} \cellcolor[rgb]{ .816,  .808,  .808}&\textbf{63.98} \cellcolor[rgb]{ .816,  .808,  .808}& \cellcolor[rgb]{ .816,  .808,  .808}\textbf{63.85}& \cellcolor[rgb]{ .816,  .808,  .808}\textbf{63.57} \\
    \bottomrule
    \end{tabular}%
    }
  \label{tab:iid_results}%
  \vspace{-10pt}
\end{table*}%
\section{Evaluation} \label{sec:eva}

\textbf{Experimental settings.} 
To comprehensively demonstrate the effectiveness of LASA, we compare it with the non-robust baseline \textit{FedAvg} and seven existing SOTA defense methods, including \textit{Bulyan}~\cite{bulyan}, \textit{Trimmed Mean (TrMean)}~\cite{trmean}, \textit{Geometric median (GeoMed)}~\cite{geomed}, \textit{Multi-Krum}~\cite{krum_mkrum_bulyan}, \textit{Divide-and-Conquer (DnC)}~\cite{dnc}, \textit{SignGuard}~\cite{signguard}, and \textit{SparseFed}~\cite{panda2022sparsefed}. We test three naive attacks including \textit{Random}, \textit{Noise} and \textit{Sign-flip} attacks and five SOTA attack methods including \textit{Min-Max}~\cite{dnc}, \textit{Min-Sum}~\cite{dnc}, \textit{AGR-tailored Trimmed-mean}~\cite{dnc}, \textit{Lie}~\cite{lie}, and \textit{ByzMean}~\cite{signguard} attacks. We mainly conduct experiments on \textit{FMNIST}~\cite{FMNIST}, \textit{FEMNIST}~\cite{leaf}, \textit{CIFAR-10}~\cite{cifar10}, \textit{CIFAR-100}~\cite{cifar10} and \textit{Shakespeare}~\cite{FL_OG_shake} datasets. FEMNIST and Shakespeare datasets are naturally non-IID. 
{For FMNIST, CIFAR-10 and CIFAR-100 datasets, we evenly split the dataset over the clients to simulate the IID settings, and use \textit{Dirichlet distribution}~\cite{minka2000estimating} $Dir(\alpha)$ to simulate the non-IID settings with a default non-IID degree $\alpha=0.5$.} 
The default \textit{attack ratio} is set to 25\%, meaning 25\% of the clients are malicious in our FL system. Note that the number of malicious clients selected for training per round may differ due to client sampling. For the hyperparameters in LASA, for all datasets, we set the \textit{sparsification level} to 0.3 (i.e., $1-k/d=0.3$), $\lambda_d$ is set to 1.0 by default while $\lambda_m$ is set to 1.0 for CIFAR10/100 and 2.0 for others. More details of the experimental settings are given in Appendix~\ref{more_experimental_setting}. Additionally, the attack and defense models are presented in Appendix~\ref{attack_defense_models}. 
{We run each experiment with three random seeds and report the average testing accuracy.} We use \textbf{bold font} to highlight the best results, while the second-best results are \underline{underlined}.

\textbf{Performance of LASA in IID settings.} We first comprehensively evaluate the performance of all the defense methods in IID settings. From the results on FMNIST, CIFAR-10, and CIFAR-100 datasets (shown in Table~\ref{tab:iid_results}), we can see that except for Noise attack on CIFAR-100 where LASA achieves second-best performance, LASA achieves the \textit{\textbf{highest accuracy under all attacks}}, outperforming other defense methods.
For example, under ByzMean attack, LASA, GeoMed, and SignGuard stand out as the most effective defense methods on FMNIST, and LASA achieves the highest accuracy of 87.65\%, which is +0.03\% and +4.60\% higher than SignGuard and GeoMed, respectively. On CIFAR-10 under TailoredTrMean attack, LASA achieves the highest accuracy of 89.05\%, which is +0.60\% and +4.02\% higher than SignGuard and Bulyan, respectively. 
\textit{It is noteworthy that \textbf{LASA reaches ``ceiling'' performance levels} as it consistently achieves accuracy similar to scenarios without attacks.}
We observe that the classic robust aggregation rule-based methods including TrMean, GeoMed, Multi-Krum, and Bulyan, as well as the sparsification-based method SparseFed, fail to defend against advanced distance-based attacks like TailoredTrMean and ByzMean, which maximize or minimize the distance between benign and malicious models. The reason is that these classic robust aggregation rules often filter malicious parameters at coordinate level or based on model-wise distance; SparseFed is ineffective due to its limited capability in removing malicious parameters. The robustness of LASA, illustrated by the above-mentioned results, emphasizes its potential as a robust defense method in securing FL environments against a wide collection of attacks, ultimately enhancing the reliability of FL systems. Additional results on MNIST, FEMNIST, and Shakespeare datasets are given in Appendix~\ref{more_results}.

\begin{table}[t]
  \centering
  \caption{Testing Accuracy (\%) of LASA in Non-IID Settings on CIFAR-10 (C-10) and CIFAR-100 (C-100) Datasets, Compared with Multi-Krum, GeoMed and SignGuard under ByzMean.}
  \scalebox{0.66}{
    \begin{tabular}{c|c|cccccc|c}
    \toprule
    \multirow{2}[4]{*}{\textbf{Dataset}} & \multirow{2}[4]{*}{\textbf{Method}} & \multicolumn{6}{c}{\textbf{Non-IID degrees} $\alpha$} & \multirow{2}[4]{*}{\textbf{Avg.}} \\
\cmidrule{3-8}          &       & 0.1   & 0.2   & 0.3   & 0.4   & 0.5   & 1.0 \\
    \midrule
    \multirow{4}[2]{*}{C-10} 
    & Multi-Krum & 26.34 & 39.34 & 52.06 & 55.52 & 61.31 & 74.75 & 51.55\\
    & GeoMed & 49.14 & \underline{63.33} & \underline{71.45} & 72.82 & 75.72 & \underline{83.49} & 69.33\\
          & SignGuard & \underline{51.90} & 63.17 & 70.77 & \underline{75.05} & \underline{76.16} & 83.31 & \underline{70.06}\\
          & \cellcolor[rgb]{ .816,  .808,  .808}LASA (Ours) & \cellcolor[rgb]{ .816,  .808,  .808}\textbf{56.01} & \cellcolor[rgb]{ .816,  .808,  .808}\textbf{65.61} & \cellcolor[rgb]{ .816,  .808,  .808}\textbf{74.59} & \cellcolor[rgb]{ .816,  .808,  .808}\textbf{75.40} & \cellcolor[rgb]{ .816,  .808,  .808}\textbf{77.73} & \cellcolor[rgb]{ .816,  .808,  .808}\textbf{84.34} & \cellcolor[rgb]{ .816,  .808,  .808}\textbf{72.28}\\
    \midrule
    \vspace{-14pt}
          &       &       &       &       &       &       &  \\
    \midrule
    \multirow{4}[2]{*}{C-100} 
    & Multi-Krum & 25.83 & 37.91 & 43.64 & 45.58 & 47.58 & 51.37 & 41.99 \\
    & GeoMed & 46.46 & 53.80 & 56.99 & 58.16 & 58.72 & 59.79 & 55.65\\
          & SignGuard & \underline{48.95} & \underline{56.74} & \underline{58.50} & \underline{58.51} & \underline{59.82} & \underline{61.42} & \underline{57.32} \\
          & \cellcolor[rgb]{ .816,  .808,  .808}LASA (Ours) & \cellcolor[rgb]{ .816,  .808,  .808}\textbf{51.25} & \cellcolor[rgb]{ .816,  .808,  .808}\textbf{57.59} & \cellcolor[rgb]{ .816,  .808,  .808}\textbf{59.73} & \cellcolor[rgb]{ .816,  .808,  .808}\textbf{60.41} & \cellcolor[rgb]{ .816,  .808,  .808}\textbf{60.24} & \cellcolor[rgb]{ .816,  .808,  .808}\textbf{61.61} & \cellcolor[rgb]{ .816,  .808,  .808}\textbf{58.47}\\
    \bottomrule
    \end{tabular}%
    }
  \label{tab:non_iid}%
\vspace{-16pt}
\end{table}%
\textbf{Performance of LASA in various non-IID settings.} Here, we evaluate the effectiveness of LASA in various non-IID settings. We simulate different non-IIDness by varying the non-IID degree $\alpha$ from 0.1 to 1.0, where a smaller $\alpha$ indicates a more intense non-IIDness. 
From the results on CIFAR-10 and CIFAR-100 datasets under the SOTA ByzMean attack (shown in Table~\ref{tab:non_iid}), we observe that as $\alpha$ increases, the performance of all the defense methods improves due to the decreased data heterogeneity. Among them, LASA always achieves the highest accuracy under various non-IID degrees, leading a average of +1.15\% and +2.22\% over SOTA SignGuard. LASA individually sparsifies model updates thus reducing sparsification error, especially in non-IID cases with heterogeneous model updates. It also performs layer-wise filtering, allowing precise identification of benign/malicious model updates at a finer granularity. By adeptly integrating pre-aggregation sparsification and layer-wise adaptive aggregation, LASA effectively mitigates the impact of divergent updates, resulting in the highest accuracy among its counterparts.

\textbf{Effectiveness of LASA in model update identification.} 
\begin{figure}[t]
    \centering
    \includegraphics[width=0.90\linewidth]{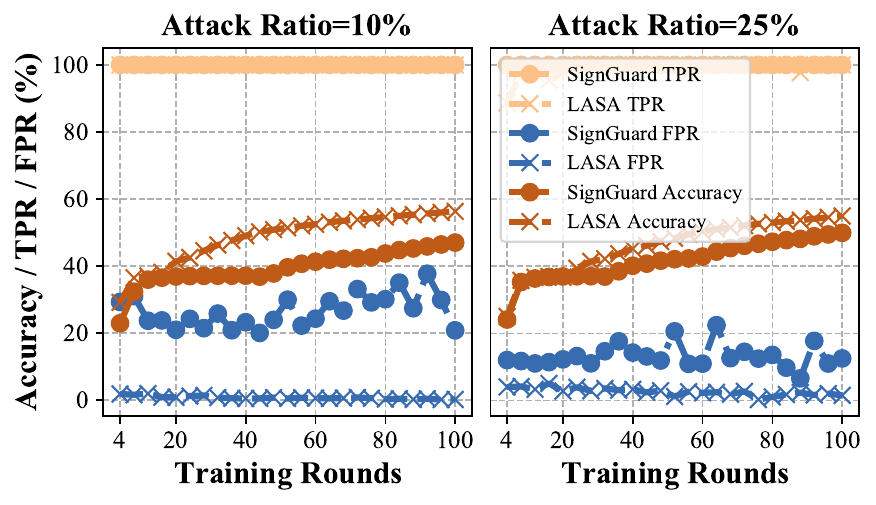}
    \vspace{-5pt}
    \caption{TPR, FPR, and Testing Accuracy (\%) of LASA and SignGuard under ByzMean Attack on Shakespeare Dataset.}
    \label{fig:CSR_WSR}
    \vspace{-15pt}
\end{figure}
To deeply investigate the effectiveness of LASA, we observe the behavior of LASA in identifying malicious updates, compared with the SOTA method SignGuard. Specifically, we use two metrics, \textit{True Positive Rate} (TPR) and \textit{False Positive Rate} (FPR), to evaluate their performance in identifying malicious updates and benign updates. A higher TPR and lower FPR imply a more accurate benign/malicious update identification. 
As shown in Figure~\ref{fig:CSR_WSR}, LASA and SignGuard achieve significantly high TPRs, which means they can effectively identify malicious updates. However, SignGuard achieves relatively high FPRs in both attack ratio settings. For example, with a high attack ratio of 25\%, at each round, SignGuard misidentifies about 15\% percent of benign updates as malicious ones, so that the convergence rate of the global model drops due to the lack of benign updates. In contrast, LASA always keeps a very low FPR, demonstrating the superior performance of our unique design of layer-wise adaptive filtering.

\textbf{Impact of various attack ratios.}
\begin{figure}[t]
    \centering
\includegraphics[width=0.90\linewidth]{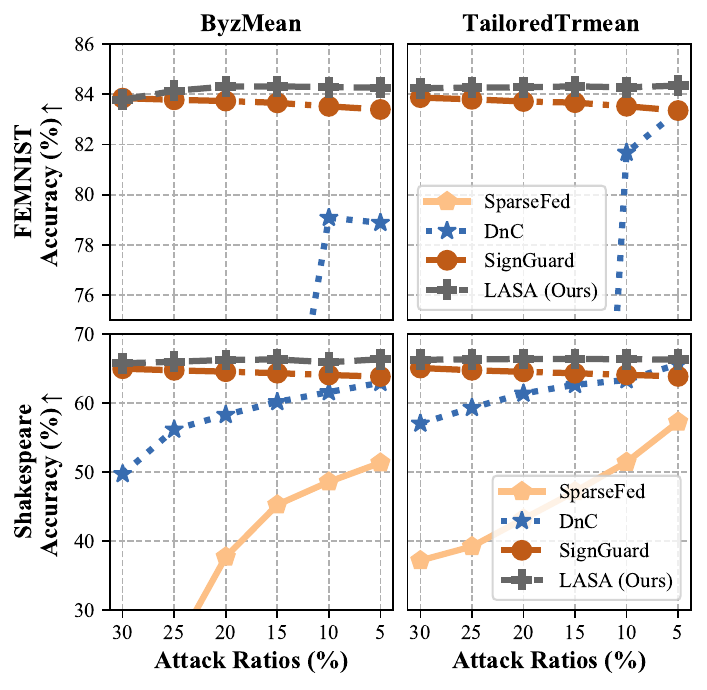}
    \vspace{-5pt}
    \caption{Testing Accuracy (\%) of LASA SparseFed, DnC, and SignGuard under Various Attack Ratios on the Non-IID FEMNIST (upper) and Shakespeare (lower) Datasets.}
    \label{fig:attackraio}
  \vspace{-15pt}
\end{figure}
We additionally evaluate the performance of three SOTA defense methods including DnC, SignGuard, and SparseFed, and our method LASA under different attack ratios on non-IID datasets and report the results in Figure~\ref{fig:attackraio}. Specifically, we conduct experiments under ByzMean and TailoredTrMean attacks with the attack ratio varying from 5\% to 30\%. In general, DnC and SparseFed's accuracies increase as the attack ratio decreases, but they suffer from significant accuracy degradation when the attack ratio is high. For instance, on FEMNIST, even when the attack ratio is as low as 5\%, SparseFed does not improve the robustness, achieving an accuracy of 7.44\% under the ByzMean attack. Similarly, DnC struggles to defend against ByzMean attack effectively until the attack ratio is reduced to 10\%, achieving a relatively low accuracy of 79.09\%. 
Compared to SignGuard, LASA achieves a better and more stable performance. As the attack ratio increases, LASA only has a minor decrease in accuracy. More results under other attacks with various attack ratios are given in Appendix~\ref{more_attack_ratios_results}.

\textbf{Ablation study.} As LASA consists of three key components to achieve Byzantine resilience, we conduct an ablation study to investigate how each component functions. We denote \textit{pre-aggregation sparsification}, \textit{magnitude-based adaptive filtering}, and \textit{direction-based adaptive filtering} as \textbf{\textit{Spar}}, \textbf{\textit{Ma}}, and \textbf{\textit{Di}}, respectively. 
Experimental results on CIFAR-10 and FEMNIST are summarized in Table~\ref{tab:ab}. As expected, only applying pre-aggregation sparsification does not provide enough robustness compared to LASA as it only removes partial less important malicious parameters. We observe that direction-based adaptive filtering is powerful in defending against the stealthy Min-Max and Lie attacks, but it is vulnerable to the simple Noise attack that generates malicious updates with large magnitudes. In contrast, magnitude-based adaptive filtering is effective in defending against Noise attack but is less effective against Min-Max and Lie attacks. Notably, when integrated with pre-aggregation sparsification, the performance of direction-based adaptive filtering improves: the accuracy of Spar+Di is higher than that of Di. This demonstrates the effectiveness of sparsification in improving the filtering accuracy. While LASA demonstrates comparable performance to Ma+Di, we emphasize in Section~\ref{sec:robustnessanlysis} the theoretical significance of Spar in enhancing robustness, thereby underscoring its necessity.
We further discuss the impact of different radii and sparsification levels in Appendix~\ref{sec:discuss_of_k}-~\ref{disscuss_of_lambda}. 

\begin{table}[t]
  \centering
  \caption{Comparison of Performance with Different Components}
  \scalebox{0.69}{
    \begin{tabular}{c|cccccc|c}
    \toprule
    \multirow{2}[4]{*}{\textbf{Method}} & \multicolumn{3}{c}{\textbf{CIFAR-10 (IID)}} & \multicolumn{3}{c}{\textbf{FEMNIST (non-IID)}} &\multicolumn{1}{c}{\multirow{2}[4]{*}{\textbf{Avg.}}} \\
    \cmidrule(r){2-4} \cmidrule(r){5-7}          
    & Min-Max & Lie   & Noise & Min-Max & Lie   & Noise \\
    \midrule
    Spar   & 75.00 & 87.00 & 86.04 & {44.39} & {81.08} & {54.91} & 71.40 \\
    Ma   & 85.73 & 87.99 & \textbf{{89.39}} & {40.45} & {79.38} & \textbf{{84.27}} & 77.87 \\
    Di   & \underline{{89.23}} & \underline{89.07} & {83.07} & \underline{84.26} & \textbf{{84.28}} & {68.40} & 83.05 \\
    Spar+Ma   & {84.66} & {87.99} & \underline{89.38} & {36.33} & {79.43} & \underline{84.26} & 77.01 \\
    Spar+Di   & \textbf{{89.28}} & \textbf{{89.18}} & {83.87} & \textbf{{84.28}} & \underline{84.26} & {69.03} & 83.32 \\
    Ma+Di   & {88.35} & {88.38} & {88.51} & {84.20} & {83.52} & {83.97} & \underline{86.16} \\
    \cellcolor[rgb]{ .816,  .808,  .808}LASA (Ours)   & \cellcolor[rgb]{ .816,  .808,  .808}{88.57} & \cellcolor[rgb]{ .816,  .808,  .808}{88.59} & \cellcolor[rgb]{ .816,  .808,  .808}{88.81} & \cellcolor[rgb]{ .816,  .808,  .808}{84.19} & \cellcolor[rgb]{ .816,  .808,  .808}{83.52} & \cellcolor[rgb]{ .816,  .808,  .808}{84.05} & \cellcolor[rgb]{ .816,  .808,  .808}\textbf{86.29}\\
    \bottomrule
    \end{tabular}%
    }
  \label{tab:ab}%
\vspace{-14pt}
\end{table}%

\section{Conclusion}~\label{sec:conclusion}
We present a novel Byzantine-resilient aggregation rule called LASA. LASA combines a pre-aggregation sparsification that sparsifies each local model update before aggregation with a novel layer-wise adaptive aggregation that filters and aggregates the sparsified model updates based on the magnitude and direction of each model layer. 
We theoretically analyze the robustness of LASA and provide the resilience analysis results of FL with LASA and then conduct extensive experiments on both IID and non-IID datasets to evaluate the effectiveness of LASA. Experimental results demonstrate that LASA outperforms other defense methods under both naive and advanced attacks. 

{\small
\bibliographystyle{ieee_fullname}
\bibliography{egbib}
}

\clearpage
\section{Appdenix}
\subsection{Attack and defense models}
\label{attack_defense_models}
\textbf{Attack model.} We follow the attack model in previous works~\cite{fang2020local, lie, dnc, signguard}. Specifically, the attacker controls a subset of $f$ malicious clients within the FL system. These clients can either be fake clients injected into the system by the attacker or genuine clients that have been compromised. The goal of the attacker is to degrade the overall performance of the global model in FL. The attacker has full knowledge of all benign updates in each training round. For additional background knowledge of the attacker, we follow the same settings of the proposed attack works. The malicious clients need not follow the prescribed local training protocol of FL and may send arbitrary local model updates to the server. Let $\mathcal{B}$ denote the set of benign clients in the system so that $\mathcal{B} \subset \mathcal{N}$. Under the Byzantine attack, the local model update of a client $i\in\mathcal{N}$ can be represented as
\begin{align}
    \Delta_i = \begin{cases}
            \Delta_i, & \text{if} \ i \in \mathcal{B}\\
            \beta_i, & \text{if} \ i \notin \mathcal{B}
        \end{cases}
\end{align}
where $\beta_i\in\mathbb{R}^d$ represents an arbitrary model depending on the specific attack method. 

\textbf{Defense goal.} Like previous works~\cite{fang2020local, cao2020fltrust, signguard}, we assume the server to be the defender who can deploy a robust aggregation rule, denoted by $F$, to mitigate the negative impact of malicious local models on the global model. The server has full access to the global model and local model updates in each training round, but it does not have access to the local training data of clients. We assume the server does not know the number of malicious clients unless explicitly specified. In addition, we assume that clients' submissions are made anonymously so that the server cannot track clients' actions.

\subsection{Experimental settings}
\label{more_experimental_setting}
We utilize six benchmark datasets of FL, including MNIST~\cite{MNIST}, Fashion-MNIST~\cite{FMNIST}, FEMNIST~\cite{leaf}, CIFAR-10~\cite{cifar10}, CIFAR-100~\cite{cifar10}, and Shakespeare~\cite{FL_OG_shake} datasets, to conduct the performance evaluation. 
The MNIST dataset is composed of gray-scale images of size $28\times28$ pixels for image classification tasks. It has 60,000 images for training and 10,000 images for testing. %
Similar to MNIST, Fashion-MNIST (FMNIST) dataset contains 70,000 $28\times28$ grayscale images for 10 categories of fashion products. The dataset is divided into 60,000 training images and 10,000 test images. For MNIST and FMNIST datasets, we evenly split the training data over 6,000 clients so that the distribution of private datasets on each client is IID.  
The Federated Extended MNIST (FEMNIST) dataset is a non-IID FL dataset extended from MNIST. It consists of 805,263 images hand-written by 3,550 users for a total of 62 classes, including 52 for upper and lower case characters and 10 for digits. 
We subsample 5\% of the original data following~\cite{leaf}, resulting in 1,827 clients with a total of 450,632 images. The number of samples for each client ranges from 3 to 525. 
The Shakespeare dataset is naturally a non-IID FL dataset for the next character prediction tasks. Following ~\cite{reddi2020adaptive}, we process the original data and result in a dataset consisting of 37,784 samples from 715 clients. 

The CIFAR-10 and CIFAR-100 datasest~\cite{cifar10} is a collection of 60,000 32$\times$32 color images with 50,000 training samples and 10,000 testing samples. All images are evenly distributed among 10/100 different classes, respectively. We split the training dataset over 100 clients for IID cases. For non-IID cases, we use Dirichlet distribution to simulate the non-IID settings on CIFAR-10 and CIFAR-100 datasets, which is controlled by a non-IID degree hyperparameter $\alpha$. The default value of $\alpha$ is set to 0.5 in our work.

For MNIST, FMNIST, and FEMNIST datasets, given their identical image format and size, we use the same neural network architecture in \cite{hu2023federated}. Specifically, we use a CNN model composed of two convolutional layers, each followed by max-pooling and ReLU activation functions. Two linear layers are utilized to map features to classes. For CIFAR-10/100 datasets, we use ResNet-18~\cite{resnet}. For the Shakespeare dataset, we implement a Recurrent Neural Network (RNN) model following \cite{reddi2020adaptive}. The RNN model takes a sequence of characters as input and then uses an embedding layer to convert each character into an 8-dimensional feature representation. Subsequently, two Long Short-Term Memory (LSTM) layers process these embedded characters, and a final linear layer with the softmax activation is applied.

For all datasets except CIFAR-10/100, the server randomly selects $h=100$ clients per round to perform local computations. While for CIFAR-10/100, we set $h=25$. We use SGD with momentum as the local solver, with the decay ratio and momentum parameters set to 0.99 and 0.9, respectively, for all datasets except for Shakespeare, where it is set to 0.999 and 0.5, respectively. The learning rate is set as $\eta=0.1$ for all datasets except for Shakespeare, where it is set to $\eta=1.0$. By default, the filtering radius is set as $\lambda_m = \lambda_d = 1.0$ for CIFAR-10/100. While for other datasets, we set $\lambda_m$ to 2.0. We define the sparsification level (SL) to be $1 - k/d$. A higher SL implies more parameters are zeroed out. In our experiments, SL is set as 0.3 for all datasets by default.
We run each experiment with three random seeds and report the average of the best testing accuracies achieved in each individual training. 
The experiments are conducted using PyTorch and executed on NVIDIA RTX A6000 GPUs.

\subsection{Evaluated attack methods}\label{attackmethods}
We consider eight attack methods including three naive attack methods, and five SOTA attack methods to comprehensively evaluate our method. 
\begin{itemize}
    \item \textit{Random attack.} The malicious clients send randomized updates that follow a Gaussian distribution $N(\mu, \sigma^2\textbf{I}_d)$. We set $\mu=(0,\dots,0) \in \mathbb{R}^d$ and $\sigma=0.5$.
    \item \textit{Noise attack.} The malicious clients perturb benign updates by adding Gaussian noise used in random attacks.
    \item \textit{Sign-flip attack.} The malicious clients manipulate their model updates by flipping the sign coordinately.
    \item \textit{Min-Max/Min-Sum attack~\cite{dnc}.} The malicious model updates are crafted in two steps. In the first step, the attacker generates a malicious update by perturbing the average of all benign updates. Then, for Min-Max attack, the attacker optimizes the malicious update so that its maximum Euclidean distance with any benign update is upper-bounded by the maximum distance between any two benign updates, i.e., $\max_{i,j\in\mathcal{H}}\|\Delta_i-\Delta_j\|_2$. For Min-Sum attack, the malicious update is optimized to ensure that the sum of its distances with each benign update is upper-bounded by the maximum total distance of a benign update among other benign updates, i.e.,$\max_{i\in\mathcal{H}}\sum_{j\in\mathcal{H}}\|\Delta_i-\Delta_j\|_2$. We additionally test a stealthy version of Min-Sum attack, where 
    the distance of the malicious update from any benign update is bounded by the minimum (rather than maximum) total distance of benign updates. This stealthy version is tested on all the datasets except for MNIST. We follow \cite{dnc} to keep the updates of all malicious clients the same. 
    \item \textit{AGR-tailored Trimmed-mean attack~\cite{dnc}.} AGR-tailored Trimmed-mean (TailoredTrmean) attack is designed to attack the defense method Trmean proposed in \cite{trmean} by maximizing the Euclidean distance between the aggregated result of simple average and Trmean, respectively.
    \item \textit{Lie attack~\cite{lie}.} The malicious clients apply slight changes to their local benign updates, making it hard to be detected. Specifically, the malicious clients calculate the element-wise mean $\mu_j$ and standard error $\sigma_j$ of all updates and generate the element of malicious updates by $(\beta_i)_j = \mu_j-z \times \sigma_j$, where $j \in [d]$. The scaling factor $z$ is set to 0.5 for all experiments.
    \item \textit{ByzMean attack~\cite{signguard}.} The ByzMean attack makes the mean of updates arbitrary malicious updates. Specifically, it divides malicious clients into two groups, each with $m_1$ and $m_2$ clients, respectively. 
    Clients in the first group select any existing attack methods to generate their malicious updates, denoted as $\beta_{i, \forall i \in [m_1]}$. The clients in the second group generate their malicious updates to make the average of all updates exactly equal to the average of malicious updates in $[m_1]$, which can be expressed as $  \beta_{i, \forall i \in [m_2]}=\frac{(n-m_1)\times \beta_{i, \forall i \in [m_1]}-\sum_{i=f+1}^n\Delta_i}{m_2}$ assuming the first $f$ updates are malicious. 
    We follow the same setting in \cite{signguard}, where the Lie attack is selected as the base attack method for the first group, and the size of two groups are set as $m_1=\lfloor f/ 2 \rfloor$ and $m_2 = f-m_1$.
\end{itemize}

\subsection{Additional experimental results}
\label{more_results}
\begin{table*}[htbp]
  \centering
  \caption{The main results for MNIST, FEMNIST, and Shakespeare are presented. }
  \scalebox{0.78}{
    \begin{tabular}{cc|c|cccccccc|c}
    \toprule
    \multicolumn{1}{c}{\multirow{1}[4]{*}{\textbf{\makecell*[c]{Datasets \\ (Model) }}}} & \multirow{1}[4]{*}{\textbf{\makecell*[c]{Defense \\ Methods }}} &  \multicolumn{1}{c}{\multirow{1}[4]{*}{\textbf{\makecell*[c]{No \\ Attack }}}} & \multicolumn{3}{c}{\textbf{Naive Attacks}} & \multicolumn{5}{c}{\textbf{State-of-the-art Attacks}} & \multicolumn{1}{c}{\multirow{1}[4]{*}{\textbf{\makecell*[c]{Average \\ w/ Attacks }}}}\\
\cmidrule(r){4-6}  \cmidrule(r){7-11}        &       &       & Random & Noise & Sign-flip & TailoredTrmean & Min-Max & Min-Sum & Lie   & ByzMean & \\
    \midrule
    \multicolumn{1}{c}{\multirow{9}[2]{*}{\makecell*[c]{MNIST \\ (CNN) }}} & FedAvg  & \underline{97.85} & 19.28 & 32.25 & 96.89 & 11.01 & 94.16 & 94.22 & 96.86 & 10.24 & 56.36 \\
          & TrMean & 96.14 & 94.11 & 94.50 & 95.19 & 11.35 & 88.35 & 88.41 & 93.67 & 10.74 & 72.67 \\
          & GeoMed & 94.59 & 94.66 & 94.66 & 94.21 & 94.76 & 63.99 & 52.29 & 80.82 & 94.17 & 83.69 \\
          & Multi-Krum & 97.00 & 96.50 & 96.73 & 96.97 & 11.35 & 67.33 & 69.51 & 93.82 & 10.24 & 67.43 \\
          & Bulyan & 94.95 & 96.42 & 96.41 & 94.20 & 11.70 & 63.89 & 68.00 & 90.98 & 54.88 & 71.06 \\
          & DnC   & 97.69 & 96.57 & 96.58 & \underline{97.14} & 46.31 & 64.57 & 89.89 & 96.17 & 28.29 & 76.69 \\
          & SignGuard & 96.64 & \underline{97.70} & \underline{97.70} & 96.85 & \underline{97.78} & \underline{97.58} & \underline{97.46} & \textbf{97.58} & \underline{97.63} & \underline{97.54} \\
          & SparseFed & \textbf{97.86} & 19.18 & 31.69 & 96.85 & 11.01 & 94.11 & 94.22 & 96.86 & 10.24 & 56.27 \\
          & \cellcolor[rgb]{ .816,  .808,  .808}LASA (Ours)  & \cellcolor[rgb]{ .816,  .808,  .808}97.35 & \cellcolor[rgb]{ .816,  .808,  .808}\textbf{97.96} & \cellcolor[rgb]{ .816,  .808,  .808}\textbf{98.27} & \cellcolor[rgb]{ .816,  .808,  .808}\textbf{97.26} & \cellcolor[rgb]{ .816,  .808,  .808}\textbf{97.94} & \cellcolor[rgb]{ .816,  .808,  .808}\textbf{97.93} & \cellcolor[rgb]{ .816,  .808,  .808}\textbf{97.94} & \cellcolor[rgb]{ .816,  .808,  .808}\underline{97.54} & \cellcolor[rgb]{ .816,  .808,  .808}\textbf{97.94} & \cellcolor[rgb]{ .816,  .808,  .808}\textbf{97.85} \\
    \midrule
    \vspace{-14pt}
          &       &       &       &       &       &       &       &       &       &       &  \\
    \midrule
    \multicolumn{1}{c}{\multirow{9}[2]{*}{\makecell*[c]{FEMNIST \\ (CNN) }}} & FedAvg  & \textbf{84.27} & 42.60 & 48.15 & 81.30 & 5.58 & 58.76 & 81.68 & 81.11 & 1.28 & 50.43 \\
          & TrMean & 82.23 & 78.26 & 78.81 & 79.13 & 5.70 & 29.80 & 76.72 & 75.79 & 5.73 & 53.12 \\
          & GeoMed & 75.57 & 75.48 & 75.47 & 71.67 & 76.19 & 68.27 & 28.13 & 22.56 & 74.32 & 61.01 \\
          & Multi-Krum & 82.85 & 76.13 & 76.48 & 80.00 & 5.58 & 25.83 & 77.25 & 74.91 & 6.48 & 52.58 \\
          & Bulyan & 77.10 & 81.68 & 81.65 & 73.50 & 5.97 & 19.17 & 60.55 & 58.98 & 18.02 & 49.94 \\
          & DnC   & \underline{83.89} & 75.41 & 76.08 & 80.96 & 63.93 & 66.60 & 80.37 & 78.97 & 22.84 & 68.52 \\
          & SignGuard & 83.06 & \underline{83.75} & \underline{83.75} & \underline{79.43} & \underline{83.80} & \underline{83.80} & \underline{82.59} & \underline{82.58} & \underline{83.78} & \underline{82.68}\\
          & SparseFed & \textbf{84.27} & 42.24 & 48.07 & 81.29 & 5.58 & 60.06 & 81.71 & 81.05 & 1.28 & 50.41 \\
          & \cellcolor[rgb]{ .816,  .808,  .808}LASA (Ours)  & \cellcolor[rgb]{ .816,  .808,  .808}83.69 & \cellcolor[rgb]{ .816,  .808,  .808}\textbf{84.07} & \cellcolor[rgb]{ .816,  .808,  .808}\textbf{84.05} & \cellcolor[rgb]{ .816,  .808,  .808}\textbf{81.72} & \cellcolor[rgb]{ .816,  .808,  .808}\textbf{84.26} & \cellcolor[rgb]{ .816,  .808,  .808}\textbf{84.19} & \cellcolor[rgb]{ .816,  .808,  .808}\textbf{83.60} & \cellcolor[rgb]{ .816,  .808,  .808}\textbf{83.52} & \cellcolor[rgb]{ .816,  .808,  .808}\textbf{84.14} & \cellcolor[rgb]{ .816,  .808,  .808}\textbf{83.94} \\

    \midrule
    \vspace{-14pt}
          &       &       &       &       &       &       &       &       &       &       &  \\
    \midrule
    \multicolumn{1}{c}{\multirow{9}[2]{*}{\makecell*[c]{Shakespeare \\ (LSTM) }}} & FedAvg  & 63.74 & 45.00 & 47.28 & 60.43 & 39.01 & 59.17 & 63.35 & \underline{62.79} & 24.24 & 50.41 \\
          & TrMean & 63.15 & 59.09 & 59.43 & 59.83 & 42.23 & 57.54 & 62.60 & 61.86 & 37.38 & 54.75 \\
          & GeoMed & 57.63 & 57.67 & 57.67 & 52.55 & 57.89 & 57.72 & 57.89 & 56.24 & 56.28 & 57.24 \\
          & Multi-Krum & 62.26 & 61.55 & 61.73 & 59.11 & 35.11 & 54.30 & 62.09 & 58.34 & 23.16 & 52.92 \\
          & Bulyan & 60.89 & 62.73 & 62.76 & 58.05 & 49.39 & 54.61 & 60.71 & 59.11 & 52.90 & 57.41 \\
          & DnC   & \underline{64.67} & 61.38 & 61.47 & \underline{60.80} & 59.32 & 61.10 & \textbf{64.70} & 62.30 & 56.18 & 60.65 \\
          & SignGuard & 63.65 & \underline{65.26} & \underline{65.26} & 59.84 & \underline{64.76} & \underline{64.76} & 60.83 & 62.35 & \underline{64.76} & \underline{61.97} \\
          & SparseFed & 63.72 & 44.49 & 47.24 & 60.40 & 39.24 & 59.84 & 63.31 & 62.77 & 24.27 & 50.69 \\
          & \cellcolor[rgb]{ .816,  .808,  .808}LASA (Ours)  & \cellcolor[rgb]{ .816,  .808,  .808}\textbf{65.08} & \cellcolor[rgb]{ .816,  .808,  .808}\textbf{66.25} & \cellcolor[rgb]{ .816,  .808,  .808}\textbf{66.24} & \cellcolor[rgb]{ .816,  .808,  .808}\textbf{62.56} & \cellcolor[rgb]{ .816,  .808,  .808}\textbf{66.32} & \cellcolor[rgb]{ .816,  .808,  .808}\textbf{65.63} & \cellcolor[rgb]{ .816,  .808,  .808}\underline{64.02} & \cellcolor[rgb]{ .816,  .808,  .808}\textbf{64.25} & \cellcolor[rgb]{ .816,  .808,  .808}\textbf{65.99} & \cellcolor[rgb]{ .816,  .808,  .808}\textbf{65.16} \\
    \bottomrule
    \end{tabular}%
    }
  \label{tab:main_results_extra}%
\end{table*}%

In this section, we set the attack ratio to 25\%, and for FEMNIST and Shakeperare datasets, we set $\lambda_d$ to 1.5. As shown in Table~\ref{tab:main_results_extra}, LASA demonstrates its robustness against the naive and SOTA attack methods in IID settings, whereas almost all other defense methods are vulnerable to at least one attack method. Under no attack, LASA achieves a test accuracy comparable to FedAvg on MNIST dataset. This demonstrates the effectiveness of LASA in maintaining accuracy, not just in adversarial environments, but also in benign environments. 

For MNIST dataset, LASA achieves the best performance against naive attacks with the highest accuracy of 97.96\% for Random attack, 98.27\% for Noise attack, and 97.26\% for Sign-Flip attack, outperforming all other defense methods. In contrast, SignGuard, DnC and LASA can effectively defend against TailoredTrmean and ByzMean attacks. Under TailoredTrmean attack, LASA achieves the highest accuracy of 97.94\%, which is +0.17\% and +51.63\% higher than SignGuard and DnC, respectively; under ByzMean attacks, LASA achieves the highest accuracy of 97.94\%, which is 0.31\% and +69.66\% higher than SignGuard and DnC, respectively.

\textbf{Main results in non-IID settings.} Compared to FedAvg under no attack, we can see that LASA can maintain the accuracy of FL in the benign environment with only a -0.57\% accuracy drop on FEMNIST dataset and even a +1.34\% accuracy increase on Shakespeare dataset. We also observe that the performance of classic robust aggregation rules, including Trmean, GeoMed, Multi-Krum, and Bulyan, is poor on non-IID datasets. For example, Trmean and Multi-Krum completely failed against the ByzMean attack on FEMNIST dataset, yielding an accuracy of 5.73\% and 6.48\%, respectively. As we discussed in the related works, in non-IID settings, the divergence between benign model updates will increase, making these classic methods hard to filter out malicious model updates. For FEMNIST dataset, LASA outperforms all other defense methods. It achieves an accuracy of 84.26\% at best under TailoredTrmean attack, which is identical to that of Mean under no attack. In addition, LASA outperforms SignGuard more significantly in non-IID settings, compared to their performance in IID settings. Specifically on Shakespeare dataset, the performance of SignGuard is not stable. For example, under Sign-Flip attack, the accuracy of SignGuard drops to 59.84\%, while LASA achieves the highest accuracy of 62.56\% (+2.72\%). Under Min-Sum attack, SignGuard's accuracy drops to 60.83\%, while LASA achieves an accuracy of 64.017\% (+3.19\%), which is comparable to the best accuracy achieved by DnC. 

In a nutshell, the performance of LASA is not only manifested in attack scenarios but also in the absence of any attacks, which aligns with the design principles of LASA. Moreover, LASA shows robustness to both IID and more challenging non-IID cases. By adeptly integrating pre-aggregation sparsification and layer-wise adaptive aggregation, LASA effectively mitigates the impact of updates that diverge from others. The robustness of LASA, illustrated by the above-mentioned results, emphasizes its potential as a robust defense method in securing federated learning environments against a wide collection of attacks, ultimately enhancing the reliability of federated learning systems.

\begin{figure*}[ht]
    \centering
        \includegraphics[width=\linewidth]{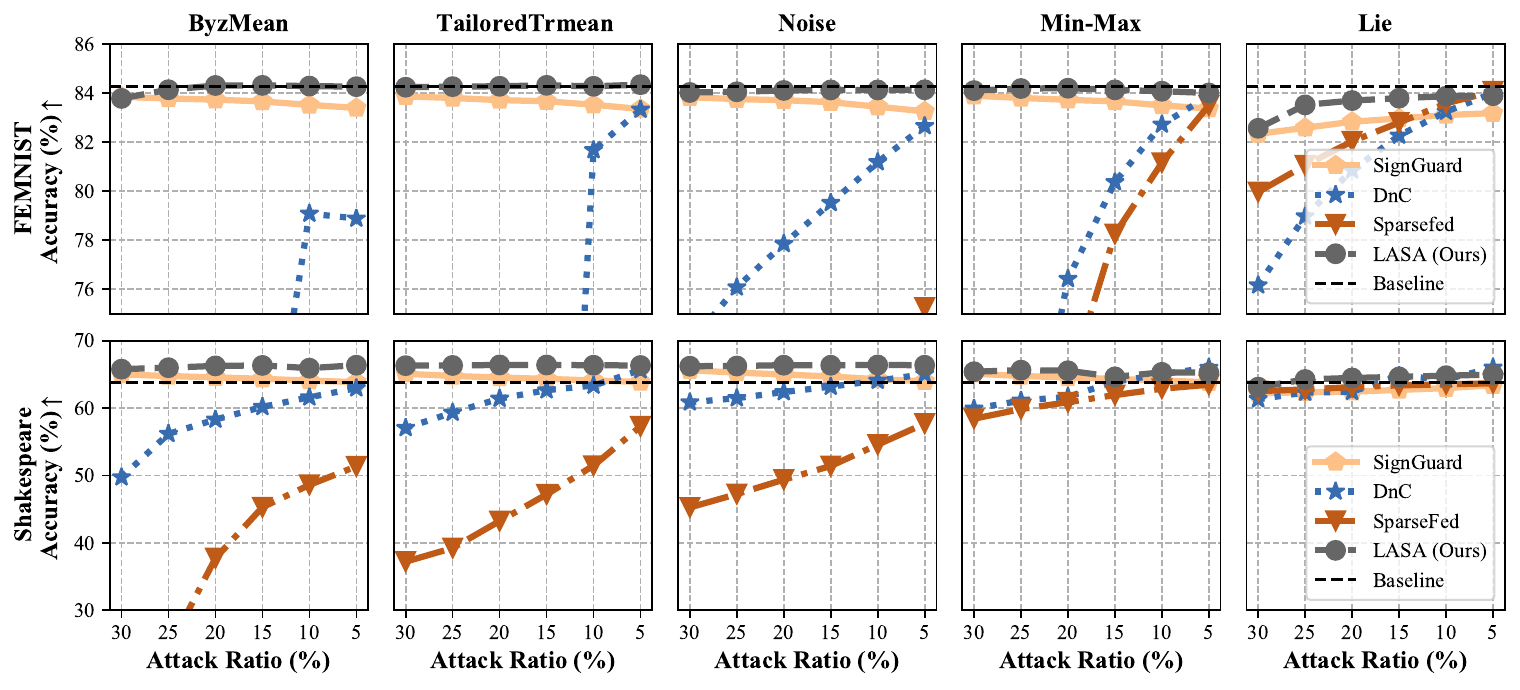}
    \caption{Testing Accuracy of LASA, SignGuard, DnC and SparseFed under Various Attack Ratios in non-IID Settings.}
    \label{fig:noniid}
\end{figure*}
\subsection{More results under various attack ratios}
\label{more_attack_ratios_results}

We evaluate the performance of three SOTA defense methods including DnC, SignGuard and SparseFed, and our method LASA under different attack ratios on non-IID datasets and report the results in Fig.~\ref{fig:noniid}. Specifically, we conduct experiments under one naive attack and four SOTA attacks with the attack ratio varying from 5\% to 30\%. In Fig.~\ref{fig:noniid}, the \textit{Baseline} represents the non-robust method Mean under no attack. In general, DnC and SparseFed's accuracies increase as the attack ratio decreases, but they suffer from significant accuracy degradation when the attack ratio is high, especially under Byzmean and TailoredTrmean attacks. For instance, on FEMNIST dataset, even when the attack ratio is as low as 5\%, SparseFed does not improve the robustness, achieving an accuracy of 7.44\% under the ByzMean attack. Similarly, DnC struggles to defend against ByzMean attack effectively until the attack ratio is reduced to 10\%, achieving a relatively low accuracy of 79.09\%. SignGuard outperforms DnC and SparseFed significantly. However, under Byzmean, TailoredTrmean, Noise, and Min-Max attacks, the accuracy of SignGuard decreases as the attack ratio decreases. Compared to SignGuard, our method LASA achieves a better and more stable performance. As the attack ratio increases, LASA only has a minor decrease in accuracy. 

\subsection{Impact of sparsification level}
\label{sec:discuss_of_k}

\begin{table}[t]
  \centering
  \caption{Performance of LASA with Different Sparsification Levels.}
  \scalebox{0.68}{
    \begin{tabular}{cc|ccccccc}
    \toprule
    \multirow{2}[4]{*}{\textbf{Att.}} & \multirow{2}[4]{*}{\textbf{Data.}} & \multicolumn{7}{c}{\textbf{Sparsification Level}} \\
\cmidrule{3-9}          &       & 0.1   & 0.3   & 0.5   & 0.7   & 0.9   & 0.95  & 0.99 \\
    \midrule
    \multirow{4}[2]{*}{\rotatebox{90}{ByzMean}} & M & 97.833 & 97.943 & 97.821 & 97.543 & \textbf{98.053} & 97.880 & 97.490 \\
          & FM & 87.820 & 87.647 & \textbf{87.943} & 87.867 & 87.803 & 87.740 & 86.437 \\
          & FEM & \textbf{84.143} & 84.138 & 84.137 & 84.118 & 83.834 & 83.489 & 81.120 \\
          & Sha   & \textbf{66.024} & 65.990 & 65.842 & 65.409 & 64.355 & 63.055 & 60.463 \\
    \midrule
        \vspace{-11pt}
          &       &       &       &       &       &       &       &  \\
    \midrule
    \multirow{4}[2]{*}{\rotatebox{90}{Min-Max}} & M & 97.310 & \textbf{97.930} & 97.493 & 97.307 & 97.557 & 96.950 & 97.593 \\
          & FM & 87.917 & 87.907 & 87.920 & \textbf{87.967} & 87.330 & 87.707 & 86.353 \\
          & FEM & 84.184 & \textbf{84.264} &84.203 & 84.162 & 83.772 & 83.361 & 81.038 \\
          & Sha   & 64.723 & \textbf{66.324} & 65.472 & 65.032 & 63.654 & 62.527 & 60.090 \\
    \midrule
    \vspace{-11pt}
          &       &       &       &       &       &       &       &  \\
    \midrule
    \multirow{4}[2]{*}{\rotatebox{90}{Noise}} & M & 98.223 & 98.270 & \textbf{98.320} & 97.643 & 98.050 & 97.923 & 97.800 \\
          & FM & 87.877 & 87.870 & 87.893 & \textbf{87.933} & 87.623 & 87.897 & 86.423 \\
          & FEM & \textbf{84.061} & 84.053 & 84.023 & 84.018 & 83.645 & 83.269 & 80.537 \\
          & Sha   & \textbf{66.255} & 66.244 &   66.102   &   65.754    & 64.506 &   63.546    & 60.686 \\
    \bottomrule
    \end{tabular}%
    }
  \label{tab:spar_level}%
\end{table}%

As we stated in Section 4.1, the optimal sparsification parameter $k$ should balance the tradeoff between sparsification error and robustness improvement. Here, we empirically study the impact of different $k$ on learning performance. Recall that the SL is defined as $1 - k/d$, hence, a smaller $k$ implies a higher SL and a heavier sparsification. We report the performance of LASA under Noise, Min-Max, and ByzMean attacks with SLs varying from 0.1 to 0.99 in Table~\ref{tab:spar_level}, where M, FM, FEM, and Sha represent MNIST, FMNIST, FEMNIST, and Shakespeare datasets, respectively. The results demonstrate that there exists an optimal SL that maximizes robustness and a very high SL may lead to a significant accuracy drop. For example, as SL increases, the accuracy of LASA on FMNIST dataset increases to 87.94\% and then decreases to 86.44\% under ByzMean attack. This occurs because the sparsification error overwhelms the robustness improvement when SL is too large. We also observe that the sensitivity of LASA on SL depends on both the dataset and the attack method.

\subsection{Impact of filtering radius}
\begin{table}[t]
  \centering
  \caption{Performance of LASA with Different Filtering Radius}
  \scalebox{0.72}{
    \begin{tabular}{cc|cccccc}
    \toprule
    \multicolumn{2}{c}{\textbf{Con.}} & \multicolumn{2}{c}{\textbf{MNIST}} & \multicolumn{2}{c}{\textbf{FMNIST}} & \multicolumn{2}{c}{\textbf{FEMNIST}} \\
    \cmidrule(r){1-2}  \cmidrule(r){3-4} \cmidrule(r){5-6} \cmidrule(r){7-8}
    $\lambda_d$ & $\lambda_m$ & Noise & ByzMean & Noise & ByzMean & Noise & ByzMean \\
    \midrule
    1.0   & 1.0 & 97.963 & 97.803 &87.950 & \textbf{87.887}& 83.922 & 84.158 \\
    1.0   &  1.5     & 97.883 & 97.843 & \textbf{88.023} & 87.720 & 83.946 & \textbf{84.209} \\
    1.0   &   2.0    & \textbf{98.270} & 97.943 & 87.870 & 87.647 & 84.007 & 84.119 \\
    1.0   &   4.0    & 91.743 &  97.840    & 77.400 & 77.930 & 69.408 & 84.048 \\
    1.5 & 2.0   & 97.927 & \textbf{98.023} & 87.937 & 87.640 & 84.053 & 84.138 \\2.0
          & 2.0   & 97.593 & 97.487 & 87.950 & 84.000&  84.136    & 77.399 \\3.0
          & 2.0   & 97.883 & 66.897 & 87.917 & 67.250 &   \textbf{84.225}   & 28.300 \\
    \bottomrule
    \end{tabular}%
    }
  \label{tab:filter}%
\end{table}%
\label{disscuss_of_lambda}
In this subsection, we study the performance of LASA with different filtering radius $\lambda_m$ and $\lambda_d$. A smaller $\lambda_m$ or $\lambda_d$ indicates more stringent filtering and results in a smaller benign set for aggregation. As shown in Table~\ref{tab:filter}, there exist optimal $\lambda_m$ and $\lambda_d$ that balance the filtering intensity and maximize the model accuracy. 
We also observe that the effectiveness of Noise attack is marginally affected by $\lambda_d$, as random noise perturbation does not change the sign purity in expectation. For all datasets, the optimal $\lambda_d$ under Noise attack is 1.0 (note that for FEMNIST, the best accuracy when $\lambda_d=3.0$ is comparable to the accuracy when $\lambda_d=1.0$). However, as Noise attack adds Gaussian noise to the model updates to increase their magnitude (in $L_2$ norm), the effectiveness of Noise attack is sensitive to the values of $\lambda_m$. For different datasets, the optimal $\lambda_m$ are different. For the advanced ByzMean attack, its effectiveness is marginally affected by $\lambda_m$, as the accuracy of LASA does not change much when $\lambda_m$ increases from 1.0 to 2.0. This demonstrates that the magnitudes of malicious updates generated by ByzMean attack are close to that of benign models. In order to make the attack effective, ByzMean attack mainly focuses on manipulating the model direction, making it sensitive to the direction filtering radius $\lambda_d$: the accuracy of LASA vibrates a lot as $\lambda_d$ increases. Additionally, both $\lambda_m$ and $\lambda_d$ should not be too large to compromise the effectiveness of the filtering.

\subsection{Computational cost of LASA}
We evaluate the computational cost of LASA in comparison to other methods. LASA incorporates pre-aggregation sparsification, leading to a complexity of $O(d\log d)$ due to the use of sorting algorithms like \textit{merge sort} in the parameter space of local updates. Consequently, the worst-case computational expense for LASA is $O(nd\log d)$. Despite this, LASA's computational burden is on par with other methods such as Krum and Multi-Krum, which have a complexity of $O(dn^2)$, and Trmean with $O(dn\log n)$.
\clearpage
\onecolumn
\subsection{Proof preliminaries}

\subsubsection{Useful Inequalities}

\begin{lemma}Given any two vectors $a, b \in \mathbb{R}^d$,
\label{lemma_for_product}
\begin{align*}
    2\left < a ,b \right > \leq \alpha \left \| a \right \|^2 + \frac{1}{\alpha} \left \| b \right \|^2, \forall \; \alpha > 0.
\end{align*} 
\end{lemma}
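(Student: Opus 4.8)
The plan is to derive the inequality from the non-negativity of a suitably weighted squared norm, which is the standard route for this kind of Young-type bound. Since $\alpha > 0$, both $\sqrt{\alpha}$ and $1/\sqrt{\alpha}$ are well-defined real scalars, so I would introduce the auxiliary vector $v \coloneqq \sqrt{\alpha}\, a - (1/\sqrt{\alpha})\, b \in \mathbb{R}^d$ and start from the elementary observation that $\|v\|^2 \geq 0$.

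Next I would expand $\|v\|^2$ using the bilinearity and symmetry of the Euclidean inner product. This gives $\|v\|^2 = \alpha \|a\|^2 - 2\langle a, b\rangle + (1/\alpha)\|b\|^2$, where the cross term $-2\langle \sqrt{\alpha}\, a, (1/\sqrt{\alpha})\, b\rangle$ collapses to exactly $-2\langle a, b\rangle$ because the scalar factors $\sqrt{\alpha}$ and $1/\sqrt{\alpha}$ cancel, and the pure-square terms pick up the weights $\alpha$ and $1/\alpha$ respectively. Combining this identity with $\|v\|^2 \geq 0$ and rearranging immediately yields the claimed bound $2\langle a, b\rangle \leq \alpha \|a\|^2 + (1/\alpha)\|b\|^2$.

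There is essentially no technical obstacle in this argument; the single point that requires care is that the hypothesis $\alpha > 0$ is exactly what guarantees the square roots are real and the construction of $v$ is legitimate (and also that dividing through preserves the inequality direction). This is the weighted, or Peter--Paul, form of Young's inequality, and the value of isolating it as a lemma is that the free parameter $\alpha$ can later be tuned to balance the relative contributions of $\|a\|^2$ and $\|b\|^2$ wherever inner-product cross terms must be absorbed in the subsequent robustness and resilience proofs.
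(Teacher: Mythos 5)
Your proof is correct. The paper states this lemma without any proof at all (it is a standard Peter--Paul/Young-type inequality listed among the ``useful inequalities'' in the appendix), so there is nothing to compare against; your argument --- expanding $\bigl\|\sqrt{\alpha}\,a - (1/\sqrt{\alpha})\,b\bigr\|^2 \geq 0$ and rearranging --- is the canonical one-line justification, and your remark about why $\alpha > 0$ is needed is exactly the right point of care.
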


\begin{lemma}Given any two vectors $a, b \in \mathbb{R}^d$,
\label{lemma_for_break_norm}
\begin{align*}
    \left\| a + b \right \|^2 \leq (1+\delta)\left\| a \right \|^2 + (1 + \delta ^{-1})\left\| b \right \|^2, \forall \; \delta > 0.
\end{align*} 
\end{lemma}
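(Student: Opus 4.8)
The plan is to reduce this inequality to the inner-product bound already established in the preceding Lemma~\ref{lemma_for_product}, so that no new estimation is needed. First I would expand the squared norm of the sum via the standard polarization identity, writing $\|a+b\|^2 = \|a\|^2 + 2\langle a,b\rangle + \|b\|^2$. This step serves only to isolate the cross term $2\langle a,b\rangle$, which is precisely the quantity the previous lemma controls; the two pure terms $\|a\|^2$ and $\|b\|^2$ can then be carried along unchanged.

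Next I would invoke Lemma~\ref{lemma_for_product}, which asserts that $2\langle a,b\rangle \leq \alpha\|a\|^2 + \alpha^{-1}\|b\|^2$ for every $\alpha > 0$. The one genuinely substantive move is the specialization of the free parameter: I would set $\alpha = \delta$, where $\delta$ is exactly the positive constant appearing in the target statement, obtaining $2\langle a,b\rangle \leq \delta\|a\|^2 + \delta^{-1}\|b\|^2$. Substituting this bound back into the expansion and grouping the coefficients of $\|a\|^2$ and of $\|b\|^2$ then yields $\|a+b\|^2 \leq (1+\delta)\|a\|^2 + (1+\delta^{-1})\|b\|^2$, which is the claimed estimate.

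The proof is therefore essentially a one-line corollary of Lemma~\ref{lemma_for_product}, and there is no real analytic obstacle. The closest thing to a subtlety is the parameter matching: Lemma~\ref{lemma_for_product} is quantified over all $\alpha > 0$, while the present statement is quantified over all $\delta > 0$, so I would note that the identification $\alpha = \delta$ is legitimate because both parameters range over the same positive reals, making the present inequality exactly the $\alpha = \delta$ instance of the earlier one.
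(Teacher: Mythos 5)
Your proof is correct: expanding $\left\| a+b \right\|^2 = \left\| a \right\|^2 + 2\left\langle a,b \right\rangle + \left\| b \right\|^2$ and bounding the cross term via Lemma~\ref{lemma_for_product} with $\alpha = \delta$ immediately gives the claim. The paper states this lemma without proof (it is a standard inequality), and your derivation is exactly the intended one-line argument, so there is nothing to add.
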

\begin{lemma} \label{lemma_for_extract_sum}Given arbitrary set of n vectors $\{a_i\}^n_{i=1}$, $a_i \in \mathbb{R}^d$,
\begin{align*}
    \left\|  \sum^n_{i=1} a_i \right \|^2 \leq n\sum^n_{i=1}\left \| a_i\right \|^2.
\end{align*}
\end{lemma}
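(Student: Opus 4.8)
The plan is to prove the inequality by expanding the squared norm of the sum into a double sum of inner products and then controlling the off-diagonal cross terms with a Young-type inequality, reusing the machinery already established in Lemma~\ref{lemma_for_product}. Concretely, I would start from the identity $\left\| \sum_{i=1}^n a_i \right\|^2 = \sum_{i=1}^n \sum_{j=1}^n \langle a_i, a_j \rangle$, which follows directly from bilinearity of the inner product, and separate the diagonal terms ($i=j$) from the off-diagonal terms ($i \neq j$). The diagonal contributes exactly $\sum_{i=1}^n \|a_i\|^2$, so the entire task reduces to bounding $\sum_{i \neq j} \langle a_i, a_j \rangle$.

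For the off-diagonal part, I would invoke Lemma~\ref{lemma_for_product} with the choice $\alpha = 1$, which gives the symmetric bound $2\langle a_i, a_j \rangle \leq \|a_i\|^2 + \|a_j\|^2$ for every pair. Applying this to each of the $n(n-1)$ ordered off-diagonal pairs yields
\begin{align*}
    \sum_{i \neq j} \langle a_i, a_j \rangle \leq \frac{1}{2}\sum_{i \neq j} \left( \|a_i\|^2 + \|a_j\|^2 \right) = (n-1)\sum_{i=1}^n \|a_i\|^2,
\end{align*}
where the last equality comes from the key bookkeeping step: for each fixed index $i$, the term $\|a_i\|^2$ appears once for each of the $n-1$ choices of the other index in each of the two summands, contributing a total coefficient of $\frac{1}{2}(n-1)+\frac{1}{2}(n-1) = n-1$. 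Combining the diagonal and off-diagonal contributions then gives $\left\|\sum_{i=1}^n a_i\right\|^2 \leq \sum_{i=1}^n \|a_i\|^2 + (n-1)\sum_{i=1}^n \|a_i\|^2 = n \sum_{i=1}^n \|a_i\|^2$, which is the claim.

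Since this is an elementary inequality, there is no genuine obstacle; the only point requiring care is the combinatorial counting of how many times each $\|a_i\|^2$ is generated when the pairwise bound is summed, which is why I would make the multiplicity argument explicit rather than leave it implicit. As a cross-check I would also note the alternative one-line derivation via the triangle inequality followed by Cauchy--Schwarz, namely $\left\|\sum_{i=1}^n a_i\right\| \leq \sum_{i=1}^n \|a_i\| = \sum_{i=1}^n 1 \cdot \|a_i\| \leq \sqrt{n}\,\bigl(\sum_{i=1}^n \|a_i\|^2\bigr)^{1/2}$, whose square reproduces the bound; equivalently, the statement is just convexity of $\|\cdot\|^2$ (Jensen's inequality) rescaled by $n^2$. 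I would present the expansion route as the main proof for consistency with the surrounding lemmas and mention that equality holds precisely when all the $a_i$ coincide.
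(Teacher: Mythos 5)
Your proof is correct. Note that the paper itself states this lemma without proof, treating it as a standard preliminary inequality, so there is no paper argument to compare against; your expansion into diagonal and off-diagonal inner products, with the cross terms controlled by Lemma~\ref{lemma_for_product} at $\alpha=1$ and the multiplicity count $(n-1)\sum_{i=1}^n\|a_i\|^2$ for the off-diagonal contribution, is a valid and self-contained way to fill that gap, and your cross-check via Cauchy--Schwarz (equivalently, convexity of $\|\cdot\|^2$) and the equality characterization are also correct.
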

\begin{lemma}\label{local_divergence}
    If the learning rate $\eta\leq 1/2\tau$, under Assumption 2 and 3, the local divergence of benign model updates are bounded as follows:
    \begin{equation}
        \frac{1}{|\mathcal{B}|} \sum_{i \in \mathcal{B}} \mathbb{E}\left\|  {\Delta}_i - \bar{\Delta}_{\mathcal{B}} \right\|^2 \leq 2\bar{\nu} + \bar{\zeta}
    \end{equation}
\end{lemma}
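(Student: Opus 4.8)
The plan is to exploit the variance-minimization property of the empirical mean to replace the data-dependent center $\bar{\Delta}_{\mathcal{B}}$ by a fixed deterministic reference, and then to separate the two sources of dispersion — stochastic gradient noise and inter-client heterogeneity — so that they map onto $\bar{\nu}$ and $\bar{\zeta}$ respectively. Concretely, since $\bar{\Delta}_{\mathcal{B}} = \frac{1}{|\mathcal{B}|}\sum_{i\in\mathcal{B}}\Delta_i$ is the minimizer of $c \mapsto \frac{1}{|\mathcal{B}|}\sum_{i\in\mathcal{B}}\|\Delta_i - c\|^2$, I would first bound
\begin{equation*}
\frac{1}{|\mathcal{B}|}\sum_{i\in\mathcal{B}}\mathbb{E}\left\|\Delta_i - \bar{\Delta}_{\mathcal{B}}\right\|^2 \leq \frac{1}{|\mathcal{B}|}\sum_{i\in\mathcal{B}}\mathbb{E}\left\|\Delta_i - \nabla\mathcal{L}_{\mathcal{B}}(\theta^t)\right\|^2,
\end{equation*}
choosing the full benign gradient at the round's starting point $\theta^t$ as the reference. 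This removes the coupling between the summands introduced by the shared empirical mean and lets each client be handled separately.

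Next I would write the local update as the accumulation of its $\tau$ local stochastic-gradient steps and use convexity of $\|\cdot\|^2$ (equivalently Lemma~\ref{lemma_for_extract_sum}) to pass from the $\tau$-step average to a per-step sum, reducing the problem to controlling $\|g_i(\theta_{i,s}) - \nabla\mathcal{L}_{\mathcal{B}}(\theta^t)\|^2$ for each local iterate $\theta_{i,s}$. Each such term I would split, via Lemma~\ref{lemma_for_break_norm} or an exact expansion of the squared norm, into a conditionally zero-mean stochastic component $g_i(\theta_{i,s}) - \nabla\mathcal{L}_i(\theta_{i,s})$ and a deterministic heterogeneity component $\nabla\mathcal{L}_i(\theta_{i,s}) - \nabla\mathcal{L}_{\mathcal{B}}(\theta_{i,s})$, plus a trajectory-drift remainder comparing the gradients at $\theta_{i,s}$ with those at $\theta^t$. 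Taking expectation over the local mini-batches and applying the tower rule, Assumption~\ref{ass2} bounds the first component by $\nu_i^2$ (and annihilates the cross term, since the noise is unbiased), while Assumption~\ref{ass3} bounds the averaged heterogeneity component by $\bar{\zeta}$; averaging over $i\in\mathcal{B}$ then turns the variance contribution into $\bar{\nu}$.

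The hard part will be the trajectory-drift remainder: the stochastic gradients are evaluated at the moving local iterates $\theta_{i,s}$ rather than at the common point $\theta^t$, so neither the variance nor the heterogeneity bound applies verbatim, and the $\tau$-step accumulation threatens to introduce a factor growing with $\tau$. I would control the drift $\|\theta_{i,s} - \theta^t\|$ recursively — it is bounded by $\eta$ times the partial sum of the local gradient norms — and invoke the learning-rate condition $\eta \leq 1/2\tau$ (together with $\mu$-smoothness, Assumption~\ref{ass1}, when the gradients must be transported back to $\theta^t$) to show that this accumulation stays uniformly bounded rather than blowing up with $\tau$. Carrying the constants through this recursion is exactly what promotes the naive $\bar{\nu} + \bar{\zeta}$ into the stated $2\bar{\nu} + \bar{\zeta}$: the extra copy of $\bar{\nu}$ absorbs the drift-induced bias, while the learning-rate condition guarantees that the geometric-type series arising in the recursion sums to a constant independent of $\tau$. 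Once the drift remainder is absorbed, collecting the three pieces yields the claimed bound $2\bar{\nu} + \bar{\zeta}$.
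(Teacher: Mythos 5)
Your overall plan --- recenter at a fixed reference, decompose into noise plus heterogeneity, then control a drift remainder --- is not the paper's route, and the drift remainder is precisely where it breaks. The paper never anchors anything at the round's starting point $\theta^t$. It writes $\Delta_i = \eta\sum_{s=0}^{\tau-1} g_i^s$, pulls the $\tau$-step sum out via Lemma~\ref{lemma_for_extract_sum} (at the price of a prefactor $3\tau^2\eta^2$, later killed by $\eta\le 1/2\tau$), and decomposes $g_i^s - \frac{1}{|\mathcal{B}|}\sum_{j\in\mathcal{B}} g_j^s$ into three pieces evaluated \emph{at the local iterates themselves}: the client's own noise $g_i^s - \nabla\mathcal{L}_i(\theta_i^s)$, the noise of the average $\nabla\mathcal{L}_\mathcal{B}(\theta_i^s) - \frac{1}{|\mathcal{B}|}\sum_{j\in\mathcal{B}} g_j^s$, and the heterogeneity $\nabla\mathcal{L}_i(\theta_i^s) - \nabla\mathcal{L}_\mathcal{B}(\theta_i^s)$. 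Because Assumptions~\ref{ass2} and~\ref{ass3} hold for \emph{every} $x\in\mathbb{R}^d$, they apply verbatim at the moving iterates; nothing is ever transported back to $\theta^t$, so no drift term arises and no smoothness is needed. The two noise pieces produce the $2\bar{\nu}$ (that is where the factor $2$ comes from --- not from absorbing drift, as you claim), the third piece produces $\bar{\zeta}$, and $3\tau^2\eta^2\le 3/4$ closes the proof.

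Your version has a genuine gap at exactly the step you flag as the hard part. The drift remainder $\left\|\nabla\mathcal{L}_\mathcal{B}(\theta_{i,s}) - \nabla\mathcal{L}_\mathcal{B}(\theta^t)\right\|^2 \le \mu^2\left\|\theta_{i,s}-\theta^t\right\|^2$ is controlled by $\eta^2\bigl\|\sum_{r<s}g_i^r\bigr\|^2$, and $\mathbb{E}\|g_i^r\|^2$ contains $\mathbb{E}\|\nabla\mathcal{L}_i(\theta_i^r)\|^2$, which no assumption bounds: the paper has no bounded-gradient hypothesis. The recursion you propose does close geometrically, but only up to leftover terms of order $\eta^2\tau^2\mu^2\|\nabla\mathcal{L}_\mathcal{B}(\theta^t)\|^2$; note also that the stated condition $\eta\le 1/2\tau$ does not involve $\mu$, so this prefactor need not even be small. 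Hence your route can at best yield a bound of the form $2\bar{\nu}+\bar{\zeta}+O\bigl(\|\nabla\mathcal{L}_\mathcal{B}(\theta^t)\|^2\bigr)$ --- the kind of gradient-dependent term that, in the paper's Theorem~\ref{lasa_convergence}, must be absorbed into the descent term of the convergence analysis --- and not the gradient-free constant bound this lemma asserts. A secondary issue: your recentering reference is mis-scaled, since $\Delta_i$ carries a factor of roughly $\eta\tau$ while $\nabla\mathcal{L}_\mathcal{B}(\theta^t)$ does not; the variance-minimization inequality is valid for any center, but you would need $\eta\tau\nabla\mathcal{L}_\mathcal{B}(\theta^t)$, and even after fixing that, the drift obstruction above remains.
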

\begin{proof}
Given that $\Delta_i = \eta \sum^{\tau-1}_{s=0} g^{s}_i$ where $\eta$ is the learning rate and $g^{s}_i$ is the local stochastic gradient over the mini-batch $s$. We have
\begin{align}
    \frac{1}{|\mathcal{B}|}\sum_{i \in \mathcal{B}} \mathbb{E}\left \|  \Delta_i - \bar{\Delta}_\mathcal{B} \right \|^2 
    &= \frac{1}{|\mathcal{B}|}\sum_{i \in \mathcal{B}} \mathbb{E}\left \|  \eta \sum^{\tau - 1}_{s=0} g^{s}_i - \frac{1}{|\mathcal{B}|}\sum_{i \in \mathcal{B}}\eta\sum^{\tau - 1}_{s=0} g^{s}_i \right \|^2 \\
    & =\frac{\eta^2}{|\mathcal{B}|}\sum_{i \in \mathcal{B}} \mathbb{E} \left \|  \sum^{\tau - 1}_{s=0} g^{s}_i - \frac{1}{|\mathcal{B}|}\sum_{i \in \mathcal{B}} \sum^{\tau - 1}_{s=0} g^{s}_i \right \|^2 \nonumber \\
    & \leq \frac{\tau \eta^2}{|\mathcal{B}|}\sum_{i \in \mathcal{B}}\sum^{\tau - 1}_{s=0} \mathbb{E}\left \|  g^{s}_i - \frac{1}{|\mathcal{B}|}\sum_{i \in \mathcal{B}}  g^{s}_i \right \|^2 \nonumber \\
    & = \frac{\tau \eta^2}{|\mathcal{B}|}\sum_{i \in \mathcal{B}}\sum^{\tau - 1}_{s=0} \mathbb{E}\left \|  \left( g^{s}_i - \nabla \mathcal{L}_i(\theta^{s}_i) \right)  + \left( \nabla \mathcal{L}_\mathcal{B}(\theta^{s}_i) - \frac{1}{|\mathcal{B}|}\sum_{i \in \mathcal{B}}  g^{s}_i \right) + \left( \nabla \mathcal{L}_i(\theta^{s}_i) -\nabla \mathcal{L}_\mathcal{B}(\theta^{s}_i) \right )  \right \|^2 \nonumber \\
    & \leq \frac{3 \tau \eta^2}{|\mathcal{B}|}\sum_{i \in \mathcal{B}}\sum^{\tau - 1}_{s=0} \underbrace{\mathbb{E} \left \|  g^{s}_i - \nabla \mathcal{L}_i(\theta^{s}_i) )\right \|^2}_{T_1}  + \frac{3 \tau \eta^2}{|\mathcal{B}|}\sum_{i \in \mathcal{B}}\sum^{\tau - 1}_{s=0}\underbrace{\mathbb{E} \left \| \nabla \mathcal{L}_\mathcal{B}(\theta^{s}_i) - \frac{1}{|\mathcal{B}|}\sum_{i \in \mathcal{B}}  g^{s}_i \right \|^2}_{T_2} \nonumber \\
    & \quad + \underbrace{\frac{3 \tau \eta^2}{|\mathcal{B}|}\sum_{i \in \mathcal{B}}\sum^{\tau - 1}_{s=0}\mathbb{E}\left \|  \nabla \mathcal{L}_i(\theta^{s}_i) -\nabla \mathcal{L}_\mathcal{B}(\theta^{s}_i)   \right \|^2}_{T_3}, \label{proof_our_kappa_t1}
\end{align}
where the first inequality follows Lemma~\ref{lemma_for_extract_sum}, and the last second follows Lemma~\ref{lemma_for_break_norm}. For $T_1$, with Assumption 2, we have
\begin{align}
    T_1 \leq \bar{\nu}. \label{proof_our_kappa_t2}
\end{align}
For $T_2$, we have
\begin{align}
    T_2 = \mathbb{E}\left \| \nabla \mathcal{L}_\mathcal{B}(\theta_i^{ s}) - \frac{1}{|\mathcal{B}|}\sum_{i \in \mathcal{B}}  g^{ s}_i \right \|^2 = \mathbb{E}\left \| \frac{1}{|\mathcal{B}|}\sum_{i \in \mathcal{B}} \left(\nabla \mathcal{L}_i(\theta_i^{ s}) -  g^{ s}_i\right) \right \|^2  
    \leq \frac{1}{|\mathcal{B}|}\sum_{i \in \mathcal{B}} \mathbb{E}\left \| \nabla \mathcal{L}_i(\theta_i^{ s}) -  g^{s}_i \right \|^2 
    \leq  \bar{\nu},\label{proof_our_kappa_t3}
\end{align}
where the first inequality follows Lemma~\ref{lemma_for_extract_sum}, and the last inequality follow Assumption 2. For $T_3$, we have
\begin{align}
    T_3 = \frac{3 \tau \eta^2}{|\mathcal{B}|}\sum_{i \in \mathcal{B}}\sum^{\tau - 1}_{s=0}\mathbb{E}\left \|  \nabla \mathcal{L}_i(\theta^{ s}_i) -\nabla \mathcal{L}_\mathcal{B}(\theta_i^{ s})   \right \|^2
    \leq 3 \tau \eta^2\sum^{\tau - 1}_{s=0} \bar{\zeta} = 3 \tau^2 \eta^2 \bar{\zeta}\label{proof_our_kappa_t4}
\end{align}
by Assumption 3. 

\noindent Plugging~\ref{proof_our_kappa_t2}, \ref{proof_our_kappa_t3}, and~\ref{proof_our_kappa_t4} back to~\ref{proof_our_kappa_t1}, with $\eta\leq 1/2\tau$, we have
\begin{align*}
    \frac{1}{|\mathcal{B}|}\sum_{i \in \mathcal{B}} \mathbb{E}\left \|  \Delta_i - \bar{\Delta}_\mathcal{B} \right \|^2 \leq 2\bar{\nu} + \bar{\zeta}.
\end{align*}
This concludes the proof.
\end{proof}

\clearpage
\subsubsection{Proof of Lemma 1}
\label{proofofpro1}

\begin{proof}
Recall that LASA denoted by $F(\cdot) \colon \mathbb{R}^{d \times n} \rightarrow \mathbb{R}^d$ is a layer-wise aggregation rule, i.e., there exist $L$ real-valued functions $F_1, \dots, F_L: \mathbb{R}^{d\times n} \rightarrow \mathbb{R}^{d}$ such that for all $\Delta_1, \dots, \Delta_n \in \mathbb{R}^d$, $[F(\Delta_1, \dots, \Delta_n)]_l = F_l(\Delta^l_1, \dots, \Delta_n^l)$. As LASA utilizes layer-wise aggregation, we have 
\begin{align}
    F_l(\Delta_1, \dots, \Delta_n) = \frac{1}{|\mathcal{S}^l|} \sum_{i \in \mathcal{S}^l} \hat{\Delta}^l_i, \nonumber 
\end{align}
where $ \hat{\Delta}^l_i $ be the $l$-th layer of the Top-$k$ sparsified model $\hat{\Delta}_i$ and $\mathcal{S}^l$ is the indices set of benign updates in $l$-th layer shown in Algorithm 1. We denote the indices set of Top-$k$ parameters of a model/layer by $\mathcal{K}$ and the set of remaining parameters by $\mathcal{K}^-$. Let $[\Delta]_{\mathcal{K}}$ represent a sparsified model with only parameters in $\mathcal{K}$ (the rest are zero), then we have
\begin{align*}
\mathbb{E} \|F(\Delta_1, \dots, \Delta_n) - \bar{\Delta}_\mathcal{B}\|^2 & = 
\mathbb{E}\sum_{l=1}^L\left\|F_l(\Delta_1, \dots, \Delta_n) - \bar{\Delta}^l_\mathcal{B}\right\|^2\\ 
& = \mathbb{E} \sum_{l=1}^L\left \|\frac{1}{|\mathcal{S}^l|}\sum_{i \in \mathcal{S}^l} \hat{\Delta}^l_i - \bar{\Delta}^l_{\mathcal{B}} \right\|^2 \\
& =\mathbb{E} \sum_{l=1}^L\frac{1}{|\mathcal{S}^l|^2} \left \|\sum_{i \in \mathcal{S}^l} \hat{\Delta}^l_i - \bar{\Delta}^l_{\mathcal{B}} \right\|^2 \\
& =\mathbb{E} \sum_{l=1}^L\frac{1}{|\mathcal{S}^l|^2} \left\| \sum_{i \in \mathcal{S}^l}\left[ \hat{\Delta}^l_i - \bar{\Delta}^l_{\mathcal{B}} \right]_{\mathcal{K}_i^l}  + \sum_{i \in \mathcal{S}^l}\left[ - \bar{\Delta}^l_{\mathcal{B}} \right]_{\mathcal{K}_i^{l-}}\right\|^2  \\
& =\mathbb{E} \sum_{l=1}^L\frac{1}{|\mathcal{S}^l|^2} \left\| \sum_{i \in \mathcal{S}^l}\left[ {\Delta}^l_i - \bar{\Delta}^l_{\mathcal{B}} \right]_{\mathcal{K}_i^l}  + \sum_{i \in \mathcal{S}^l}\left[ - \bar{\Delta}^l_{\mathcal{B}} \right]_{\mathcal{K}_i^{l-}}\right\|^2  \\
& =\mathbb{E} \sum_{l=1}^L \frac{1}{|\mathcal{S}^l|^2} \left(\left\| \sum_{i \in \mathcal{S}^l}\left[ {\Delta}^l_i - \bar{\Delta}^l_{\mathcal{B}} \right]_{\mathcal{K}_i^l} \right\|^2+ \left\|\sum_{i \in \mathcal{S}^l}\left[  \bar{\Delta}^l_{\mathcal{B}} \right]_{\mathcal{K}_i^{l-}}\right\|^2 \right).
\end{align*}

\noindent Let $c_i^l := \left\| \left[ {\Delta}^l_i - \bar{\Delta}^l_{\mathcal{B}} \right]_{\mathcal{K}_i^l} \right\|^2 / \left\|  {\Delta}_i - \bar{\Delta}_{\mathcal{B}} \right\|^2$, $ b_{\mathcal{B}}^l:= \left\|\left[  \bar{\Delta}^l_{\mathcal{B}} \right]_{\mathcal{K}_i^{l-}}\right\|^2/\left\| \bar{\Delta}_{\mathcal{B}} \right\|^2$, $C_{\mathcal{B}}^2:= \left\| \bar{\Delta}_{\mathcal{B}} \right\|^2$, $b_{\mathcal{B}}:= \sum_{l=1}^L b_{\mathcal{B}}^l $, and $c_i := \sum_{l=1}^Lc_i^l $, we have
\begin{align}
    \mathbb{E}\|F(\Delta_1, \dots, \Delta_n) - \bar{\Delta}_\mathcal{B}\|^2
    & \leq \mathbb{E}\sum_{l=1}^L \frac{1}{|\mathcal{S}^l|} \sum_{i \in \mathcal{S}^l}\left(\left\| \left[ {\Delta}^l_i - \bar{\Delta}^l_{\mathcal{B}} \right]_{\mathcal{K}_i^l} \right\|^2+ \left\|\left[  \bar{\Delta}^l_{\mathcal{B}} \right]_{\mathcal{K}_i^{l-}}\right\|^2 \right) 
 \nonumber \\
    &  = \mathbb{E}\sum_{l=1}^L \frac{1}{|\mathcal{S}^l|} \sum_{i \in \mathcal{S}^l} \left(c_i^l\left\|  {\Delta}_i - \bar{\Delta}_{\mathcal{B}} \right\|^2+ b_{\mathcal{B}}^l\left\| \bar{\Delta}_{\mathcal{B}} \right\|^2 \right)\nonumber \\
    & = \mathbb{E}\sum_{l=1}^L \frac{1}{|\mathcal{S}^l|} \sum_{i \in \mathcal{S}^l} \left(c_i^l\left\|  {\Delta}_i - \bar{\Delta}_{\mathcal{B}} \right\|^2+ b_{\mathcal{B}}^lC_{\mathcal{B}}^2 \right)\nonumber,\\
    & = \mathbb{E}\sum_{l=1}^L \frac{1}{|\mathcal{S}^l|} \sum_{i \in \mathcal{S}^l} c_i^l\left\|  {\Delta}_i - \bar{\Delta}_{\mathcal{B}} \right\|^2+ C_{\mathcal{B}}^2\sum_{l=1}^L b_{\mathcal{B}}^l \nonumber\\
    & = \underbrace{\mathbb{E}\frac{1}{|\mathcal{S}^l|} \sum_{i \in \mathcal{S}^l} c_i\left\|  {\Delta}_i - \bar{\Delta}_{\mathcal{B}} \right\|^2 }_{T_1}+ C_{\mathcal{B}}^2b_{\mathcal{B}},
    \label{T_1_start}
\end{align}
where the first inequality follows Lemma~\ref{lemma_for_extract_sum}. Note that $c_i = \left\| \left[ {\Delta}_i - \bar{\Delta}_{\mathcal{B}} \right]_{\mathcal{K}_i} \right\|^2 / \left\|  {\Delta}_i - \bar{\Delta}_{\mathcal{B}} \right\|^2$ and $ b_{\mathcal{B}}= \left\|\left[  \bar{\Delta}_{\mathcal{B}} \right]_{\mathcal{K}_i^{-}}\right\|^2/\left\| \bar{\Delta}_{\mathcal{B}} \right\|^2$.

\noindent Now we treat $T_1$. If $\mathcal{S}^l \subseteq \mathcal{B}$, we have 
\begin{align}
    T_1 = \mathbb{E}\left[\frac{1}{|\mathcal{S}^l|} \sum_{i \in \mathcal{S}^l} c_i \left\|  {\Delta}_i - \bar{\Delta}_{\mathcal{B}} \right\|^2 \right]
    &\leq  
    \mathbb{E}\left[\frac{1}{|\mathcal{S}^l|} \sum_{i \in \mathcal{B}} c_i\left\|  {\Delta}_i - \bar{\Delta}_{\mathcal{B}} \right\|^2\right].\label{kappa_in_1} 
\end{align}

\noindent If $\mathcal{S}^l \nsubseteq \mathcal{B}$, let $\mathcal{P} = \mathcal{S}^l \cap \mathcal{B}$, and $\mathcal{R} = \mathcal{S}^l \backslash \mathcal{B}$, let $C_{\mathcal{M},i}^2: = \left\|  {\Delta}_i\right\|^2, \forall i \in [N]\backslash \mathcal{B}$, then we have
\begin{align}
    T_1 = \mathbb{E}\left[\frac{1}{|\mathcal{S}^l|} \sum_{i \in \mathcal{S}^l} c_i\left\|  {\Delta}_i - \bar{\Delta}_{\mathcal{B}} \right\|^2 \right]
    &=  
    \mathbb{E}\left[\frac{1}{|\mathcal{S}^l|} \left ( \sum_{i \in \mathcal{P}} c_i\left\|  {\Delta}_i - \bar{\Delta}_{\mathcal{B}} \right\|^2 + \sum_{i \in \mathcal{R}} c_i\left\|  {\Delta}_i - \bar{\Delta}_{\mathcal{B}} \right\|^2 \right ) \right]\nonumber \\
    &\leq
    \mathbb{E}\left[\frac{1}{|\mathcal{S}^l|}  \sum_{i \in \mathcal{B}} c_i\left\|  {\Delta}_i - \bar{\Delta}_{\mathcal{B}} \right\|^2\right] + \mathbb{E}\left[\frac{1}{|\mathcal{S}^l|} \sum_{i \in \mathcal{R}} c_i\left\|  {\Delta}_i - \bar{\Delta}_{\mathcal{B}} \right\|^2\right] \nonumber \\
    & \leq
    \mathbb{E}\left[\frac{1}{|\mathcal{S}^l|}  \sum_{i \in \mathcal{B}} c_i\left\|  {\Delta}_i - \bar{\Delta}_{\mathcal{B}} \right\|^2\right] + \mathbb{E}\left[\frac{2}{|\mathcal{S}^l|} \sum_{i \in \mathcal{R}}c_i \left ( \left\|  {\Delta}_i\right\|^2 +  \left \| \bar{\Delta}_{\mathcal{B}} \right\|^2 \right ) \right]\nonumber \\
    & =
    \mathbb{E}\left[\frac{1}{|\mathcal{S}^l|}  \sum_{i \in \mathcal{B}} c_i\left\|  {\Delta}_i - \bar{\Delta}_{\mathcal{B}} \right\|^2\right] + \mathbb{E}\left[\frac{2}{|\mathcal{S}^l|} \sum_{i \in \mathcal{R}}c_i \left ( C_{\mathcal{M},i}^2 + C_\mathcal{B}^2 \right )\right]
    \label{kappa_in_2},
\end{align}
where the second inequality follows Lemma~\ref{lemma_for_break_norm}.

\noindent Due to the use of MZ-score, models in $\mathcal{S}^l$ are centered around the median within a $\lambda_m$ (and $\lambda_d$) radius. If the radius parameter $\lambda_m$ or $\lambda_d$ equals to zero, only the median model (based on $l_2$-norm or PDP) will be selected for averaging. To maximize benign model inclusion in averaging, the radius parameters $\lambda_m$ and $\lambda_d$ are set sufficiently large to ensure $|\mathcal{S}^l|\geq n/2 -f$. More precisely, assume there exist two positive constants $\lambda^{+}_m$ and $\lambda^{+}_d$, and if the radius parameters $\lambda_m$ and $\lambda_d$ in Algorithm 1 satisfy $\lambda_m \geq \lambda^{+}_m, \lambda_d \geq \lambda^{+}_d$, we have $|\mathcal{S}^l|\geq n/2 -f$, $\forall l\in[L]$. Integrated with~\ref{kappa_in_1} and~\ref{kappa_in_2}, we have
\begin{align}
      T_1 
 & \leq \nonumber \begin{cases}
            \frac{2}{n-2f} \mathbb{E}\sum_{i \in \mathcal{B}} c_i\left\|  {\Delta}_i - \bar{\Delta}_{\mathcal{B}} \right\|^2 , & \text{if} \ \mathcal{S}^l \subseteq \mathcal{B}\\ \nonumber \\
            \frac{2}{n-2f} \mathbb{E} \sum_{i \in \mathcal{B}} c_i\left\|  {\Delta}_i - \bar{\Delta}_{\mathcal{B}} \right\|^2 + \frac{4}{n-2f}\mathbb{E} \sum_{i \in \mathcal{R}}c_i \left ( C_{\mathcal{M},i}^2 + C_\mathcal{B}^2 \right ), & \text{if} \ \mathcal{S}^l \nsubseteq \mathcal{B} 
        \end{cases}\nonumber\\ 
        & \leq \frac{2}{n-2f} \mathbb{E} \sum_{i \in \mathcal{B}} c_i\left\|  {\Delta}_i - \bar{\Delta}_{\mathcal{B}} \right\|^2 + \frac{4}{n-2f} \mathbb{E} \sum_{i \in \mathcal{R}}c_i \left ( C_{\mathcal{M},i}^2 + C_\mathcal{B}^2 \right ) \nonumber\\
        & \leq \frac{2 c_{max}}{n-2f} \mathbb{E} \sum_{i \in \mathcal{B}} \left\|  {\Delta}_i - \bar{\Delta}_{\mathcal{B}} \right\|^2 + \frac{4c_{max}}{n-2f}  \sum_{i \in \mathcal{R}} \left ( C_{\mathcal{M},i}^2 + C_\mathcal{B}^2 \right ) \nonumber\\
        & = \frac{2c_{max}|\mathcal{B}|}{n-2f} \frac{1}{|\mathcal{B}|} \sum_{i \in \mathcal{B}}\mathbb{E} \left\|  {\Delta}_i - \bar{\Delta}_{\mathcal{B}} \right\|^2 + \frac{4c_{max}}{n-2f} \sum_{i \in \mathcal{R}} \left ( C_{\mathcal{M},i}^2 + C_\mathcal{B}^2 \right )\nonumber \\
        & = \frac{2c_{max}(n-f)}{n-2f} \frac{1}{|\mathcal{B}|} \sum_{i \in \mathcal{B}} \mathbb{E}\left\|  {\Delta}_i - \bar{\Delta}_{\mathcal{B}} \right\|^2 + \frac{4c_{max}}{n-2f} \sum_{i \in \mathcal{R}} \left ( C_{\mathcal{M},i}^2 + C_\mathcal{B}^2 \right ) \nonumber\\
        & \leq \frac{2(n-f)}{n-2f} (2\bar{\nu} + \bar{\zeta})c_{max} + \frac{4c_{max}}{n-2f}\underbrace{ \sum_{i \in \mathcal{R}} \left ( C_{\mathcal{M},i}^2 + C_\mathcal{B}^2 \right )}_{T_2}
        \label{Flkappa},
\end{align}
where the second inequality holds as $ c_{max} := \max\{c_i, i\in[N]\}$ and the last inequality follows Lemma~\ref{local_divergence}.

\noindent Assume the benign model update is bounded as $\|\Delta_i\|^2 \leq C^2, \forall i\in \mathcal{B}$, which can be achieved by using gradient clipping in practice. Assume the malicious model update is bounded as $\|\Delta_i\|^2 \leq C_{\lambda_m}^2, \forall i\in [N]\backslash \mathcal{B}$, which depends on the specific attack method and our magnitude-based filtering that is controlled by $\lambda_m$ in Algorithm 1. We have
\begin{align}
    T_2 = \sum_{i \in \mathcal{R}} \left ( C_{\mathcal{M},i}^2 + C_\mathcal{B}^2 \right ) \leq |\mathcal{R}|\left(C_{\lambda_m}^2+C^2\right) \leq f \left(C_{\lambda_m}^2+C^2\right),
\end{align}
as $|\mathcal{R}|\leq |[N]\backslash  \mathcal{B}|\leq f$.
Therefore, 
\begin{align}
    T_1 & \leq c_{max}\left(\frac{2(n-f)}{n-2f} (2\bar{\nu} + \bar{\zeta}) + \frac{4f}{n-2f} \left(C_{\lambda_m}^2+C^2\right)\right)\nonumber\\
    & \leq c_k \left(\frac{2(n-f)}{n-2f} (2\bar{\nu} + \bar{\zeta}) + \frac{4f}{n-2f} \left(C_{\lambda_m}^2+C^2\right)\right)\nonumber\\
    & \leq c_k\left(1+ \frac{f}{n-2f}\right)(4\bar{\nu} + 2\bar{\zeta}+4C_{\lambda_m}^2+4C^2)\label{final_T1},
\end{align}
if the sparsification applied to the local model update satisfies Assumption 4 so that $ c_{max}\leq c_k$.

\noindent Summarizing to \eqref{T_1_start}, we have
\begin{align}
    \mathbb{E}\|F(\Delta_1, \dots, \Delta_n) - \bar{\Delta}_\mathcal{B}\|^2 & \leq {T_1}+ C_{\mathcal{B}}^2b_{\mathcal{B}}\nonumber\\
    & \leq T_1 + b_k C^2\nonumber\\
    & \leq c_k\left(1+ \frac{f}{n-2f}\right)(4\bar{\nu} + 2\bar{\zeta}+4C_{\lambda_m}^2+4C^2) + b_k C^2
\end{align}

\textbf{Discussion on the selection of $k$:}
When no sparsification is applied, i.e., when $k=d$, we have $c_k=1$ and $b_k=0$. In this case, the robustness upper bound is
\begin{equation*}
    \kappa_1 = \left(1+ \frac{f}{n-2f}\right)\left(4\bar{\nu} + 2\bar{\zeta}+4C_{\lambda_m}^2+4C^2\right) = O\left(1+ \frac{f}{n-2f}\right).
\end{equation*}
When $k=0$, we have $c_k=0$ and $b_k=1$, then
\begin{equation*}
    \kappa = C^2,
\end{equation*}
which indicates the greatest sparsification error affecting robustness. When $0<k<d$, the robustness upper bound is
\begin{equation*}
    \kappa_2 = (1+\epsilon)c_k\left(1+ \frac{f}{n-2f}\right)\left(4\bar{\nu} + 2\bar{\zeta}+4C_{\lambda_m}^2+4C^2\right) = O\left(c_k\left(1+ \frac{f}{n-2f}\right)\right)
\end{equation*}
if the sparsification parameter $k$ is selected to satisfy that
\begin{equation*}
    \text{Condition 1}: c_k\leq \frac{1}{1+\epsilon}
\end{equation*}
and 
\begin{equation*}
    \text{Condition 2}: \frac{b_k}{c_k} \leq \epsilon \left(\frac{4\bar{\nu} + 2\bar{\zeta}+4C_{\lambda_m}^2}{C^2} + 4\right)
\end{equation*}
with a positive constant $\epsilon$. As $(1+\epsilon)c_k\leq 1$, we have
\begin{equation*}
    \kappa_2\leq \kappa_1,
\end{equation*}
which demonstrates the effectiveness of sparsification for improving robustness.
This finally concludes the proof. 
\end{proof}

\clearpage

\subsubsection{Proof of Theorem 1} 
\label{proofofthe1}

\begin{proof}
    Given the update rule $\theta^{t+1} = \theta^{t} - \bar{\Delta}^t = \theta^t - \eta \tilde{\Delta}^t$ where $ \tilde{\Delta}^t_i := \sum ^{\tau-1}_{r=0}g^{t,r}_i= \tau d_i^t$, for ease of expression, we let $ \tilde{\Delta}_{\mathcal{B}^t}:= \frac{1}{|\mathcal{B}^t|}\sum_{i\in\mathcal{B}^t} \tilde{\Delta}^t_i$ and $h_i^t := \mathbb{E}[d_i^t]= \mathbb{E}\left[({1}/{\tau})\sum_{r=0}^{r=\tau-1}g_i^{t,r}\right] = ({1}/{\tau})\sum_{r=0}^{r=\tau-1}\nabla\mathcal{L}_{i}(\theta_i^{t,r})$. With Assumption 1, we have the following for all $t\in[0,T-1]$:
    \begin{align}
     \nonumber \mathcal{L}_\mathcal{B}(\theta^{t+1}) - \mathcal{L}_\mathcal{B}(\theta^{t}) &\leq \mathbb{E}\left \langle \nabla\mathcal{L}_\mathcal{B}(\theta^{t}), \theta^{t+1} - \theta^{t} \right \rangle + \frac{\mu}{2} \mathbb{E}\left \| \theta^{t+1} - \theta^{t} \right \|^2 \\ \nonumber 
    &= -\eta\mathbb{E} \left \langle \nabla\mathcal{L}_\mathcal{B}(\theta^{t}), \tilde{\Delta}^t \right \rangle+ \frac{\mu \eta^2}{2}\mathbb{E}\left \|  \tilde{\Delta}^t \right \|^2 \\ \nonumber 
    &= -\eta\mathbb{E} \left \langle \nabla\mathcal{L}_\mathcal{B}(\theta^{t}), \tilde{\Delta}^t + \tilde{\Delta}_{\mathcal{B}^t} - \tilde{\Delta}_{\mathcal{B}^t}\right \rangle+ \frac{\mu \eta^2}{2}\mathbb{E}\left \|  \tilde{\Delta}^t \right \|^2 \\ \nonumber 
    &= -\eta\mathbb{E} \left \langle \nabla\mathcal{L}_\mathcal{B}(\theta^{t}), \tilde{\Delta}_{\mathcal{B}^t} \right \rangle 
    -\eta\mathbb{E} \left \langle \nabla\mathcal{L}_\mathcal{B}(\theta^{t}), \tilde{\Delta}^t  - \tilde{\Delta}_{\mathcal{B}^t}\right \rangle
    + \frac{\mu \eta^2}{2}\mathbb{E}\left \|  \tilde{\Delta}^t \right \|^2 \\ 
    & = \underbrace{-\eta\mathbb{E} \left \langle \nabla\mathcal{L}_\mathcal{B}(\theta^{t}), \frac{1}{|\mathcal{B}^t|} \sum_{i \in \mathcal{B}^t} \tilde{\Delta}^t_i  \right \rangle}_{T_1} 
    + \underbrace{\eta\mathbb{E} \left \langle \nabla\mathcal{L}_\mathcal{B}(\theta^{t}), \tilde{\Delta}_{\mathcal{B}^t} -\tilde{\Delta}^t\right \rangle}_{T_2}
    + \underbrace{\frac{\mu \eta^2}{2}\mathbb{E}\left \|  \tilde{\Delta}^t \right \|^2}_{T_3}. \label{caog}
\end{align}
Now we treat $T_1$, $ T_2$, and $T_3$ respectively. We decompose $T_1$ by 
\begin{align}
    T_1 & =  \nonumber -\eta\mathbb{E} \left \langle \nabla\mathcal{L}_\mathcal{B}(\theta^{t}), \frac{1}{|\mathcal{B}^t|} \sum_{i \in \mathcal{B}^t} \tilde{\Delta}^t_i  \right \rangle
     = -\eta\tau\mathbb{E} \left \langle \nabla\mathcal{L}_\mathcal{B}(\theta^{t}), \frac{1}{|\mathcal{B}^t|} \sum_{i \in \mathcal{B}^t} d_i^t \right \rangle
     = -\eta\tau\mathbb{E} \left \langle \nabla\mathcal{L}_\mathcal{B}(\theta^{t}), \frac{1}{|\mathcal{B}|} \sum_{i \in \mathcal{B}} h_i^t \right \rangle
    \\
    & =\frac{\eta\tau }{2} \mathbb{E}\left \| \frac{1}{|\mathcal{B}|} \sum_{i \in \mathcal{B}} h^t_i - \nabla\mathcal{L}_\mathcal{B}(\theta^{t}) \right \|^2 -\frac{\eta \tau}{2} \mathbb{E} \left \| \nabla\mathcal{L}_\mathcal{B}(\theta^{t}) \right \|^2 - \frac{\eta \tau}{2} \mathbb{E} \left \| \frac{1}{|\mathcal{B}|} \sum_{i \in \mathcal{B}} h^t_i \right \|^2,\label{T1inter}
\end{align}
where we use the fact that $-2\left \langle a, b \right \rangle = \left \| a - b \right \|^2 - \left \| a \right\|^2 - \left \| b \right\|^2$.

We decompose $T_2$ as  
\begin{align}
    T_2 & = \eta\mathbb{E} \left \langle \nabla\mathcal{L}_\mathcal{B}(\theta^{t}), \tilde{\Delta}_{\mathcal{B}^t}-\tilde{\Delta}^t\right \rangle \leq \frac{\eta\alpha}{2}\mathbb{E} \left \| \nabla\mathcal{L}_\mathcal{B}(\theta^{t})\right \|^2 + \frac{\eta}{2\alpha}\mathbb{E} \left\|\tilde{\Delta}^t - \tilde{\Delta}_{\mathcal{B}^t}\right \|^2 ,\label{T2inter}
\end{align}
where the first inequality follows Lemma~\ref{lemma_for_product} with a $\alpha > 0$. 

We decompose $T_3$ as 
\begin{align}
     \nonumber T_3 = \frac{\mu \eta^2}{2}\mathbb{E}\left \|  \tilde{\Delta}^t \right \|^2 
    &  = \frac{\mu \eta^2}{2}\mathbb{E}\left \|  \tilde{\Delta}^t + \tilde{\Delta}_{\mathcal{B}^t} - \tilde{\Delta}_{\mathcal{B}^t}\right \|^2\\ \nonumber 
    & \leq {\mu \eta^2}\mathbb{E}\left \|   \tilde{\Delta}_{\mathcal{B}^t}\right \|^2 + \mu \eta^2\mathbb{E}\left \|  \tilde{\Delta}^t - \tilde{\Delta}_{\mathcal{B}^t}\right \|^2\\ \nonumber 
    & = {\mu \eta^2}\mathbb{E}\left \|   \frac{1}{|\mathcal{B}^t|} \sum_{i \in \mathcal{B}^t} \tilde{\Delta}^t_i\right \|^2 + \mu \eta^2\mathbb{E}\left \|  \tilde{\Delta}^t - \tilde{\Delta}_{\mathcal{B}^t}\right \|^2 \\
    & \leq \frac{\mu \eta^2}{|\mathcal{B}|} \sum_{i \in \mathcal{B}} \mathbb{E}\left \|  \tilde{\Delta}^t_i\right \|^2 + \mu \eta^2\mathbb{E}\left \|  \tilde{\Delta}^t - \tilde{\Delta}_{\mathcal{B}^t}\right \|^2 ,\label{T3inter}
\end{align}
where the first inequality follows Lemma~\ref{lemma_for_break_norm} with $\delta = 1$ and the second inequality follows Lemma~\ref{lemma_for_extract_sum}. 

Combining~\ref{T1inter},~\ref{T2inter},~\ref{T3inter} and,~\ref{caog}, we get
\begin{align}
     \nonumber \mathcal{L}_\mathcal{B}(\theta^{t+1}) - \mathcal{L}_\mathcal{B}(\theta^{t})  
    & \leq \frac{\eta\tau }{2} \mathbb{E}\left \| \frac{1}{|\mathcal{B}|} \sum_{i \in \mathcal{B}} h^t_i - \nabla\mathcal{L}_\mathcal{B}(\theta^{t}) \right \|^2 -\frac{\eta \tau}{2} \mathbb{E} \left \| \nabla\mathcal{L}_\mathcal{B}(\theta^{t}) \right \|^2 - \frac{\eta \tau}{2} \mathbb{E} \left \| \frac{1}{|\mathcal{B}|} \sum_{i \in \mathcal{B}} h^t_i \right \|^2  \\ \nonumber 
     & \quad \ + \frac{\eta\alpha}{2}\mathbb{E} \left \| \nabla\mathcal{L}_\mathcal{B}(\theta^{t})\right \|^2 + \frac{\eta}{2\alpha}\mathbb{E} \left\|\tilde{\Delta}^t - \tilde{\Delta}_{\mathcal{B}^t}\right \|^2+ \frac{\mu \eta^2}{|\mathcal{B}|} \sum_{i \in \mathcal{B}} \mathbb{E}\left \|  \tilde{\Delta}^t_i\right \|^2 + \mu \eta^2\mathbb{E}\left \|  \tilde{\Delta}^t - \tilde{\Delta}_{\mathcal{B}^t}\right \|^2\\ \nonumber 
    & = -\left(\frac{\eta \tau}{2} - \frac{\eta\alpha}{2}\right) \mathbb{E} \left \| \nabla\mathcal{L}_\mathcal{B}(\theta^{t}) \right \|^2 + \frac{\eta\tau }{2} \mathbb{E}\left \| \frac{1}{|\mathcal{B}|} \sum_{i \in \mathcal{B}} h^t_i  - \nabla\mathcal{L}_\mathcal{B}(\theta^{t}) \right \|^2 \\ \nonumber 
    & \quad \ + \left(\mu \eta^2 + \frac{\eta}{2\alpha}\right)\mathbb{E}\left \|  \tilde{\Delta}^t - \tilde{\Delta}_{\mathcal{B}^t}\right \|^2 - \frac{\eta \tau}{2} \mathbb{E} \left \| \frac{1}{|\mathcal{B}|} \sum_{i \in \mathcal{B}} h^t_i \right \|^2 + \frac{\mu \eta^2}{|\mathcal{B}|} \sum_{i \in \mathcal{B}} \mathbb{E}\left \|  \tilde{\Delta}^t_i\right \|^2 \\ \nonumber 
    & = -\left(\frac{\eta \tau}{2} - \frac{\eta\alpha}{2}\right) \mathbb{E} \left \| \nabla\mathcal{L}_\mathcal{B}(\theta^{t}) \right \|^2 + \frac{\eta\tau }{2} \mathbb{E}\left \| \frac{1}{|\mathcal{B}|} \sum_{i \in \mathcal{B}} h^t_i  - \nabla\mathcal{L}_\mathcal{B}(\theta^{t}) \right \|^2 \\
    & \quad \ + \underbrace{\left(\mu \eta^2 + \frac{\eta}{2\alpha}\right)\mathbb{E}\left \|  \tilde{\Delta}^t - \tilde{\Delta}_{\mathcal{B}^t}\right \|^2}_{T_4} - \frac{\eta \tau}{2} \mathbb{E} \left \| \frac{1}{|\mathcal{B}|} \sum_{i \in \mathcal{B}} h^t_i \right \|^2 + \frac{\mu \eta^2}{|\mathcal{B}|} \sum_{i \in \mathcal{B}} \mathbb{E}\left \|  \tilde{\Delta}^t_i\right \|^2. \label{og_inter_1}
\end{align}

$T_4$ can be decomposed as 
\begin{align}
      T_4 &= \left(\mu \eta^2 + \frac{\eta}{2\alpha}\right)\mathbb{E}\left \|  \tilde{\Delta}^t - \tilde{\Delta}_{\mathcal{B}^t}\right \|^2 \leq \kappa\left(\mu \eta^2 + \frac{\eta}{2\alpha}\right) \label{T4}
\end{align}
where the first inequality holds as LASA is $\kappa$-robust aggregation rule with $\kappa$.

Plugging~\ref{T4} back to~\ref{og_inter_1}, we have
\begin{align}
      \nonumber    \mathcal{L}_\mathcal{B}(\theta^{t+1}) - \mathcal{L}_\mathcal{B}(\theta^{t}) 
    & \leq -\left(\frac{\eta \tau}{2} - \frac{\eta\alpha}{2}\right) \mathbb{E} \left \| \nabla\mathcal{L}_\mathcal{B}(\theta^{t}) \right \|^2 + \frac{\eta\tau }{2} \mathbb{E}\left \| \frac{1}{|\mathcal{B}|} \sum_{i \in \mathcal{B}} h^t_i  - \nabla\mathcal{L}_\mathcal{B}(\theta^{t}) \right \|^2 \\ \nonumber 
    & \quad \ + \kappa\left(\mu \eta^2 + \frac{\eta}{2\alpha}\right) - \frac{\eta \tau}{2} \mathbb{E} \left \| \frac{1}{|\mathcal{B}|} \sum_{i \in \mathcal{B}} h^t_i \right \|^2 + \frac{\mu \eta^2}{|\mathcal{B}|} \sum_{i \in \mathcal{B}} \mathbb{E}\left \|  \tilde{\Delta}^t_i\right \|^2
    \\ \nonumber 
    &= -\left(\frac{\eta \tau}{2} - \frac{\eta\alpha}{2}\right) \mathbb{E} \left \| \nabla\mathcal{L}_\mathcal{B}(\theta^{t}) \right \|^2 + \frac{\eta\tau }{2} \mathbb{E}\left \| \frac{1}{|\mathcal{B}|} \sum_{i \in \mathcal{B}} h^t_i  - \nabla\mathcal{L}_\mathcal{B}(\theta^{t}) \right \|^2 \\
    & \quad \ +  \kappa\left(\mu \eta^2 + \frac{\eta}{2\alpha}\right) +  \mu \eta^2  \underbrace{\frac{1}{|\mathcal{B}|} \sum_{i \in \mathcal{B}}\mathbb{E}\left \|  \tilde{\Delta}_i^t\right \|^2}_{T_5} - \frac{\eta \tau}{2} \mathbb{E} \left \| \frac{1}{|\mathcal{B}|} \sum_{i \in \mathcal{B}} h^t_i \right \|^2  \label{og_inter_2}
\end{align}

$T_5$ can be charcterized as 
\begin{align}
     \nonumber T_5 = \frac{1}{|\mathcal{B}|} \sum_{i \in \mathcal{B}}\mathbb{E}\left \|  \tilde{\Delta}_i^t\right \|^2 &= \frac{\tau^2}{|\mathcal{B}|} \sum_{i \in \mathcal{B}}\mathbb{E}\left \| d_i^t\right \|^2 = \frac{\tau^2}{|\mathcal{B}|} \sum_{i \in \mathcal{B}}\left(\mathbb{E} \left \| d_i^t - h_i^t\right \|^2 +\mathbb{E}\left \| h_i^t\right \|^2 \right) \\ \nonumber 
    &=  \frac{\tau^2}{|\mathcal{B}|} \sum_{i \in \mathcal{B}}\left(\mathbb{E}\left\| \frac{1}{\tau}\sum_{s=0}^{\tau-1}\left(g_i^{t,s} - \nabla \mathcal{L}_{ i}(\theta_i^{t,s})\right)\right\|^2 +\mathbb{E}\left \| h_i^t\right \|^2 \right) \\ \nonumber 
    &\leq  \frac{\tau^2}{|\mathcal{B}|} \sum_{i \in \mathcal{B}}\left(\frac{1}{\tau}\sum_{s=0}^{\tau-1}\mathbb{E}\left\| g_i^{t,s} - \nabla \mathcal{L}_{i}(\theta_i^{t,s})\right\|^2 +\mathbb{E}\left \| h_i^t\right \|^2 \right) \\ \nonumber 
    &\leq  \frac{\tau^2}{|\mathcal{B}|} \sum_{i \in \mathcal{B}}\left(\frac{1}{\tau}\sum_{s=0}^{\tau-1} \nu_i^2 +\mathbb{E}\left \| h_i^t\right \|^2 \right) \\
    &=  \frac{\tau^2}{|\mathcal{B}|} \sum_{i \in \mathcal{B}}\left(\nu_i^2 +\mathbb{E}\left \| h_i^t\right \|^2 \right), \label{T5}
\end{align}
where the first inequality follows Lemma~\ref{lemma_for_extract_sum} and the second inequality follows Assumption 2.

Plugging~\ref{T5} back to~\ref{og_inter_2}, we have

\begin{align}
     \nonumber   \mathcal{L}_\mathcal{B}(\theta^{t+1}) - \mathcal{L}_\mathcal{B}(\theta^{t}) 
    &\leq -\left(\frac{\eta \tau}{2} - \frac{\eta\alpha}{2}\right) \mathbb{E} \left \| \nabla\mathcal{L}_\mathcal{B}(\theta^{t}) \right \|^2 + \frac{\eta\tau }{2} \mathbb{E}\left \| \frac{1}{|\mathcal{B}|} \sum_{i \in \mathcal{B}} h^t_i  - \nabla\mathcal{L}_\mathcal{B}(\theta^{t}) \right \|^2 - \frac{\eta \tau}{2} \mathbb{E} \left \| \frac{1}{|\mathcal{B}|} \sum_{i \in \mathcal{B}} h^t_i \right \|^2\\ \nonumber 
    & \quad \ +  \frac{\mu \eta^2 \tau^2}{|\mathcal{B}|} \sum_{i \in \mathcal{B}}\left(\nu_i^2 +\mathbb{E}\left \| h_i^t\right \|^2 \right) + \kappa\left(\mu \eta^2 + \frac{\eta}{2\alpha}\right)  \\ \nonumber 
    &= -\left(\frac{\eta \tau}{2} - \frac{\eta\alpha}{2}\right) \mathbb{E} \left \| \nabla\mathcal{L}_\mathcal{B}(\theta^{t}) \right \|^2 + \frac{\eta^2\tau }{2} \mathbb{E}\left \| \frac{1}{|\mathcal{B}|} \sum_{i \in \mathcal{B}} h^t_i  - \nabla\mathcal{L}_\mathcal{B}(\theta^{t}) \right \|^2  +  \frac{\mu \eta^2 \tau^2}{|\mathcal{B}|} \sum_{i \in \mathcal{B}}\mathbb{E}\left \| h_i^t\right \|^2   \\
    & \quad  \ +  \mu \eta \tau^2\bar{\nu}^2 + \kappa\left(\mu \eta^2 + \frac{\eta}{2\alpha}\right) \\ \nonumber 
    &\leq -\left(\frac{\eta \tau}{2} - \frac{\eta\alpha}{2}\right) \mathbb{E} \left \| \nabla\mathcal{L}_\mathcal{B}(\theta^{t}) \right \|^2 + \frac{\eta\tau }{2}\frac{1}{|\mathcal{B}|} \sum_{i \in \mathcal{B}}\frac{1}{\tau}\sum_{r=0}^{\tau-1} \mathbb{E}\left \|  \nabla\mathcal{L}_{ i}(\theta_i^{t,r})  - \nabla\mathcal{L}_{i}(\theta^{t}) \right \|^2 \\ \nonumber 
    & \quad \ +  \frac{\mu \eta^2 \tau^2}{|\mathcal{B}|} \sum_{i \in \mathcal{B}} \left ( 2\mathbb{E}\left \| h_i^t - \nabla \mathcal{L}_{i}(\theta^t)\right \|^2 + \frac{2}{|\mathcal{B}|} \sum_{i \in \mathcal{B}}\mathbb{E} \left \| \nabla\mathcal{L}_{i} (\theta^{t}) \right \|^2  \right) +  \mu \eta^2 \tau^2\bar{\nu}^2 + \kappa\left(\mu \eta^2 + \frac{\eta}{2\alpha}\right) \\ \nonumber 
    &\leq -\left(\frac{\eta \tau}{2} - \frac{\eta\alpha}{2}\right) \mathbb{E} \left \| \nabla\mathcal{L}_\mathcal{B}(\theta^{t}) \right \|^2 + \frac{\eta\tau }{2}\frac{1}{|\mathcal{B}|} \sum_{i \in \mathcal{B}}\frac{\mu^2}{\tau}\sum_{r=0}^{\tau-1} \mathbb{E}\left \|  \theta_i^{t,r} - \theta^{t} \right \|^2 + \frac{\mu \eta^2\tau^2}{|\mathcal{B}|} \sum_{i \in \mathcal{B}} \frac{\mu^2}{\tau} \sum_{r=0}^{\tau - 1}  2\mathbb{E}\left \| \theta_i^{t,r} - \theta^{t} \right \|^2\\ \nonumber 
    & \quad \  +  
    \frac{4\mu \eta^2 \tau^2}{|\mathcal{B}|} \sum_{i \in \mathcal{B}} \frac{1}{|\mathcal{B}|}\sum_{i \in \mathcal{B}} \mathbb{E} (\bar{\zeta} +  \left \| \nabla\mathcal{L}_{\mathcal{B}} (\theta^{t}) \right \|^2 )
    + \mu \eta^2 \tau^2\bar{\nu}^2 + \kappa\left(\mu \eta^2 + \frac{\eta}{2\alpha}\right) \\ \nonumber 
    &= \left[ -\left(\frac{\eta \tau}{2} - \frac{\eta\alpha}{2}\right) + 4 \mu \eta^2 \tau^2 \right] \mathbb{E} \left \| \nabla\mathcal{L}_\mathcal{B}(\theta^{t}) \right \|^2 + \left (\frac{\eta \mu^2 }{2} + 2\eta^2 \tau \mu^3 \right)\sum_{r=0}^{\tau-1}\underbrace{\frac{1}{|\mathcal{B}|} \sum_{i \in \mathcal{B}} \mathbb{E}\left \|  \theta_i^{t,r} - \theta^{t} \right \|^2}_{T_6} \\
    & \quad \  + 4 \mu \eta^2 \tau^2 \bar{\zeta}+ \mu \eta^2 \tau^2\bar{\nu}^2 + \kappa\left(\mu \eta^2 + \frac{\eta}{2\alpha}\right)
    \label{og_inter_3}
\end{align}
where the second inequality follows Lemma~\ref{lemma_for_break_norm} and the third inequality follow Assumption 1.

\noindent Now we treat $T_6$ as
\begin{align}
    T_6 &= \frac{1}{|\mathcal{B}|} \sum_{i \in \mathcal{B}} \mathbb{E}\left \|  \theta_i^{t,r} - \theta^{t} \right \|^2 = \frac{1}{|\mathcal{B}|} \sum_{i \in \mathcal{B}} \mathbb{E}\left \|  \theta_i^{t,r-1} - \theta^{t} -\eta g^{t, s-1}_i\right \|^2 \nonumber \\
    &= \frac{1}{|\mathcal{B}|} \sum_{i \in \mathcal{B}} \mathbb{E}\left \|  \theta_i^{t,r-1} - \theta^{t} -\eta g^{t, s-1}_i + \eta \nabla \mathcal{L}_i (\theta^{t, s-1}) - \eta \nabla \mathcal{L}_i (\theta^{t, s-1}) + \eta \nabla \mathcal{L}_i (\theta^{t}) - \eta \nabla \mathcal{L}_i (\theta^{t}) \right \|^2 \nonumber \\
    &= \frac{1}{|\mathcal{B}|} \sum_{i \in \mathcal{B}} \mathbb{E}\left \|\theta_i^{t,r-1} - \theta^{t} -\eta \nabla \mathcal{L}_i (\theta^{t, s-1}) + \eta \nabla \mathcal{L}_i (\theta^{t}) - \eta \nabla \mathcal{L}_i (\theta^{t}) \right \|^2 + \frac{\eta^2}{|\mathcal{B}|} \sum_{i \in \mathcal{B}} \mathbb{E} \left \| g^{t, s-1}_i -   \nabla \mathcal{L}_i (\theta^{t, s-1}) \right \|^2 \nonumber \\
    &\leq \left(1 + \frac{1}{2\tau - 1} \right) \frac{1}{|\mathcal{B}|} \sum_{i \in \mathcal{B}} \mathbb{E}\left \|\theta_i^{t,r-1} - \theta^{t} \right \|^2 + \frac{ 2 \tau \eta^2}{|\mathcal{B}|} \sum_{i \in \mathcal{B}} \left \|  \nabla \mathcal{L}_i (\theta^{t, s-1}) +  \nabla \mathcal{L}_i (\theta^{t}) -  \nabla \mathcal{L}_i (\theta^{t}) \right \|^2 + \eta^2 \bar{\nu} \nonumber \\
    &\leq \left(1 + \frac{1}{2\tau - 1} \right) \frac{1}{|\mathcal{B}|} \sum_{i \in \mathcal{B}} \mathbb{E}\left \|\theta_i^{t,r-1} - \theta^{t} \right \|^2 + \frac{ 4 \tau \eta^2}{|\mathcal{B}|} \sum_{i \in \mathcal{B}} \left \|  \nabla \mathcal{L}_i (\theta^{t, s-1}) -  \nabla \mathcal{L}_i (\theta^{t}) \right \|^2 + \frac{ 4 \tau \eta^2}{|\mathcal{B}|} \sum_{i \in \mathcal{B}} \left\| \nabla \mathcal{L}_i (\theta^{t}) \right \|^2 + \eta^2 \bar{\nu} \nonumber \\
    &\leq \left(1 + \frac{1}{2\tau - 1} \right) \frac{1}{|\mathcal{B}|} \sum_{i \in \mathcal{B}} \mathbb{E}\left \|\theta_i^{t,r-1} - \theta^{t} \right \|^2 + \frac{ 4 \tau \mu^2 \eta^2}{|\mathcal{B}|} \sum_{i \in \mathcal{B}} \left \|  \theta_i^{t,r-1} - \theta^{t}\right \|^2 + \frac{ 4\tau \eta^2}{|\mathcal{B}|} \sum_{i \in \mathcal{B}} \left\| \nabla \mathcal{L}_i (\theta^{t}) - \nabla \mathcal{L}_\mathcal{B} (\theta^{t}) + \nabla \mathcal{L}_\mathcal{B} (\theta^{t}) \right \|^2 + \eta^2 \bar{\nu} \nonumber \\
    &\leq \left(1 + \frac{1}{2\tau - 1} + 4 \tau \mu^2 \eta^2 \right) \frac{1}{|\mathcal{B}|} \sum_{i \in \mathcal{B}} \mathbb{E}\left \|\theta_i^{t,r-1} - \theta^{t} \right \|^2  + \frac{ 8 \tau \eta^2}{|\mathcal{B}|} \sum_{i \in \mathcal{B}} \left\| \nabla \mathcal{L}_\mathcal{B} (\theta^{t}) \right \|^2 + 8 \tau \bar{\zeta}\eta^2 + \eta^2 \bar{\nu} \nonumber \\
    &\leq \left(1 + \frac{1}{\tau - 1}  \right) \frac{1}{|\mathcal{B}|} \sum_{i \in \mathcal{B}} \mathbb{E}\left \|\theta_i^{t,r-1} - \theta^{t} \right \|^2 + \frac{ 8 \tau \eta^2}{|\mathcal{B}|} \sum_{i \in \mathcal{B}} \left\| \nabla \mathcal{L}_\mathcal{B} (\theta^{t}) \right \|^2 + 8 \tau \bar{\zeta}\eta^2 + \eta^2 \bar{\nu}, 
\end{align}
where the first and second inequality follows Lemma~\ref{lemma_for_break_norm} with $\delta = 2\tau$ and $\delta = 1$, respectively. The third inequality follows Assumption 1, and the last inequality holds if $\eta \leq 1/3 \tau \mu$.
Consequently, we have
\begin{align}
T_6 &= \frac{1}{|\mathcal{B}|} \sum_{i \in \mathcal{B}} \mathbb{E}\left \|  \theta_i^{t,r} - \theta^{t} \right \|^2 
\leq \sum^{s-1}_{h=0} \left(1 + \frac{1}{\tau - 1}  \right)^h \left[  8 \tau \eta^2 \left\| \nabla \mathcal{L}_\mathcal{B} (\theta^{t}) \right \|^2 + 8 \tau \bar{\zeta}\eta^2 + \eta^2 \bar{\nu} \right] \nonumber \\
& \leq (\tau - 1)\left [\left(1 + \frac{1}{\tau - 1}  \right)^\tau -1 \right ] \times \left [ 8 \tau \eta^2 \left\| \nabla \mathcal{L}_\mathcal{B} (\theta^{t}) \right \|^2 + 8 \tau \bar{\zeta}\eta^2 + \eta^2 \bar{\nu} \right] \nonumber \\
& \leq 32 \tau^2 \eta^2 \left\| \nabla \mathcal{L}_\mathcal{B} (\theta^{t}) \right \|^2 + 32 \tau^2 \bar{\zeta}\eta^2 + 4 \tau \eta^2 \bar{\nu} , \label{T_6}
\end{align}
where the last inequality results from the fact that $\left( 1 + \frac{1}{\tau - 1} \right)^t \leq 5$ when $\tau > 1$.

Plugging~\ref{T_6} back to~\ref{og_inter_3}, we have
\begin{align}
     \nonumber \mathcal{L}_\mathcal{B}(\theta^{t+1}) - \mathcal{L}_\mathcal{B}(\theta^{t}) 
    &\leq \left[ -\left(\frac{\eta \tau}{2} - \frac{\eta\alpha}{2}\right) + 4 \mu \eta^2 \tau^2 \right] \mathbb{E} \left \| \nabla\mathcal{L}_\mathcal{B}(\theta^{t}) \right \|^2  + 4 \mu \eta^2 \tau^2 \bar{\zeta}+ \mu \eta^2 \tau^2\bar{\nu}^2 + \kappa\left(\mu \eta^2 + \frac{\eta}{2\alpha}\right) \\ \nonumber 
    & \quad \ + \left (\frac{\eta \mu^2 }{2} + 2\eta^2 \tau \mu^3 \right)\sum_{r=0}^{\tau-1} \left [ 32 \tau^2 \eta^2 \left\| \nabla \mathcal{L}_\mathcal{B} (\theta^{t}) \right \|^2 + 32 \tau^2 \bar{\zeta}\eta^2 + 4 \tau \eta^2 \bar{\nu}\right] \\ \nonumber 
    &= \left[ -\left(\frac{\eta \tau}{2} - \frac{\eta\alpha}{2}\right) + 4 \mu \eta^2 \tau^2 \right] \mathbb{E} \left \| \nabla\mathcal{L}_\mathcal{B}(\theta^{t}) \right \|^2  + 4 \mu \eta^2 \tau^2 \bar{\zeta}+ \mu \eta^2 \tau^2\bar{\nu}^2 + \kappa\left(\mu \eta^2 + \frac{\eta}{2\alpha}\right) \\ \nonumber 
    & \quad \ + \left (\frac{\eta \mu^2 }{2} + 2\eta^2 \tau \mu^3 \right)\left [ 32 \tau^3 \eta^2 \left\| \nabla \mathcal{L}_\mathcal{B} (\theta^{t}) \right \|^2 + 32 \tau^3 \bar{\zeta}\eta^2 + 4 \tau^2 \eta^2 \bar{\nu} \right] \\ \nonumber 
    &= \left [ \left[ -\left(\frac{\eta \tau}{2} - \frac{\eta\alpha}{2}\right) + 4 \mu \eta^2 \tau^2 \right] + \left (\frac{\eta \mu^2 }{2} + 2\eta^2 \tau \mu^3 \right) \left(32 \eta^2 \tau^3  \right) \right] \mathbb{E} \left \| \nabla\mathcal{L}_\mathcal{B}(\theta^{t}) \right \|^2 \\ \nonumber 
    & \quad \ + \left (\frac{\eta \mu^2 }{2} + 2\eta^2 \tau \mu^3 \right) \left( 32 \tau^3 \bar{\zeta}\eta^2 + 4 \tau^2 \eta^2 \bar{\nu}\right)+ 4 \mu \eta^2 \tau^2 \bar{\zeta}+ \mu \eta^2 \tau^2\bar{\nu} + \kappa\left(\mu \eta^2 + \frac{\eta}{2\alpha}\right) \\ 
    &\leq -\eta \mathbb{E} \left \| \nabla\mathcal{L}_\mathcal{B}(\theta^{t}) \right \|^2  + \left (\frac{\eta \mu^2 }{2} + 2\eta^2 \tau \mu^3 \right) \left( 32 \tau^3 \bar{\zeta}\eta^2 + 4 \tau^2 \eta^2 \bar{\nu}\right)+ 4 \mu \eta^2 \tau^2 \bar{\zeta}+ \mu \eta^2 \tau^2\bar{\nu} + \kappa\left(\mu \eta^2 + \frac{\eta}{2\alpha}\right) \nonumber  \\ 
    &\leq -\eta \mathbb{E} \left \| \nabla\mathcal{L}_\mathcal{B}(\theta^{t}) \right \|^2 + \kappa\left(\mu \eta^2 + \frac{\eta}{4}\right) + 
         7 \eta \tau \bar{\zeta} + (1 + \tau) \eta  \bar{\nu}    \label{og_inter_4}
\end{align}
where the second inequality holds with $\alpha \geq 2$, and $ \eta  \leq 1 / 3\mu \tau$.

\noindent Times $1 / \eta$ to the both sides of~\ref{og_inter_4}, rearranging and summing it form $t=0$ to $t=T-1$ and dividing by $T$, one yields
\begin{align}
     \nonumber \frac{1}{T}\sum^{T-1}_{t=0}\left \| \nabla\mathcal{L}_\mathcal{B}(\theta^{t}) \right \|^2 \leq \frac{ \left(\mathcal{L}_\mathcal{B}(\theta^{0}) - \mathcal{L}_\mathcal{B}(\theta^{*}) \right)}{T\eta}  + \kappa\left(\mu \eta + 1 \right) + 
         7  \tau \bar{\zeta} + (1 + \tau)   \bar{\nu}.
\end{align}
Assume $\widetilde{\theta}$ is uniformly sampled from the sequence of outputs $\{\theta^0, \theta^1, \dots, \theta^T \}$ generated by FL with LASA as the $F$, then we have 
\[
\mathbb{E}\left \| \nabla\mathcal{L}_\mathcal{B}(\widetilde{\theta}) \right \|^2 = \frac{1}{T}\sum^{T-1}_{t=0}\left \| \nabla\mathcal{L}_\mathcal{B}(\theta^{t}) \right \|^2,
\]
which concludes the proof. 
\end{proof}

\end{document}